\documentclass{eptcs}
\providecommand{\publicationstatus}{}

\usepackage{amsmath,amssymb,amsthm}
\usepackage{mathtools}
\usepackage{booktabs}
\usepackage{enumitem}
\usepackage{hyperref}
\usepackage{xcolor}
\usepackage{tikz-cd}
\usepackage{multirow}

\newtheorem{theorem}{Theorem}[section]
\newtheorem{proposition}[theorem]{Proposition}
\newtheorem{corollary}[theorem]{Corollary}
\newtheorem{definition}[theorem]{Definition}
\newtheorem{remark}[theorem]{Remark}
\newtheorem{construction}[theorem]{Construction}

\newcommand{\Z}{\mathbb{Z}}
\newcommand{\R}{\mathbb{R}}

\newcommand{\D}{\mathcal{D}}

\newcommand{\cat}[1]{\mathbf{#1}}

\newcommand{\tr}{\mathrm{tr}}
\newcommand{\Chamfer}{\mathrm{d_C}}
\newcommand{\loopa}{\mathrm{loop}_a}
\newcommand{\loopb}{\mathrm{loop}_b}

\newcommand{\ParLoop}{\cat{ParLoop}}
\newcommand{\FdVect}{\cat{FdVect}}
\newcommand{\Para}{\cat{Para}}

\def\titlerunning{Functorial Neural Architectures from Higher Inductive Types}
\def\authorrunning{K. Sargsyan}

\title{Functorial Neural Architectures from Higher Inductive Types}
\author{Karen Sargsyan
\institute{Institute of Chemistry, Academia Sinica, Taipei, Taiwan}
\email{karen.sarkisyan@gmail.com}}

\begin{document}
\maketitle

\begin{abstract}
Neural networks often learn the parts of a task but fail on novel combinations of those parts.  We argue that this failure is architectural: a decoder generalizes compositionally only when it respects the algebraic laws of the task, i.e. when it descends from freely generated sequences to the quotient determined by those laws.  We make this principle constructive by compiling Higher Inductive Type (HIT) specifications into neural architectures.  Basepoints, path constructors, and 2-cells are mapped to base constraints, generator networks, structural concatenation, and learned homotopies.  The resulting transport decoders are strict monoidal functors by construction: decoding a concatenated word is concatenation of independently generated loop segments.  In contrast, we prove that softmax self-attention cannot simultaneously satisfy strict monoidal composition and descent to any non-trivial compositional quotient.  Experiments on the torus, wedge of circles, and Klein bottle validate the predicted hierarchy: functorial decoders outperform non-functorial alternatives by $2$--$10\times$, and a learned 2-cell closes a $46\%$ error gap precisely on words exercising the Klein-bottle relation. These results suggest that compositional generalization should be enforced as functorial structure in the architecture, rather than learned from examples alone.
\end{abstract}

\section{Introduction}\label{sec:intro}

A model that has learned to add 2-digit numbers should handle 5-digit numbers---the algorithm is the same, applied to more parts.  A robot that can navigate around one obstacle should handle two obstacles by composing single-obstacle plans.  A language model trained on simple commands should handle ``go left then right'' without retraining.  In each case, the task decomposes into parts that combine by a known rule, and the model must respect that rule on inputs never seen during training. This is compositional generalization, and neural networks systematically fail at it.  Standard neural networks fail on SCAN~\cite{lake2018generalization} (linguistic composition), COGS~\cite{kim2020cogs} (semantic parsing), and multi-step arithmetic~\cite{dziri2023faith}.  These failures are not capacity limitations and persist as models scale.

We argue that the failure is architectural.  Consider a decoder that handles a combined input by producing each part's output independently, then combining the results: $\D(w_1 \cdot w_2) = \D(w_1) \oplus \D(w_2)$.  In categorical language, this equation says the decoder is a monoidal functor from the input algebra to the output algebra.  We prove that a specific class of decoders (transport decoders) are monoidal functors (Theorem~\ref{thm:transport-comp}), and that softmax attention cannot simultaneously satisfy strict factorization and descent, for any choice of parameters (Theorem~\ref{thm:attention-noncomp}).  The reason is concrete: two different input sequences can represent the same compositional meaning (e.g., $ab$ and $ba$ in an abelian group), so a compositional decoder must produce the same output for both.  But attention computes different key vectors for different token orderings, producing different outputs regardless of the learned weights.

More precisely, compositional generalization decomposes into two requirements on the decoder $\D : F \to \ParLoop(X)$, where $F$ is the free monoid on the generators.  First, \emph{strict factorization}: $\D(w_1 \cdot w_2) = \D(w_1) \oplus \D(w_2)$ for all $w_1, w_2 \in F$, so $\D$ respects concatenation.  Second, \emph{descent}: for each relation $r_j$ of the presented group $G = F / \langle r_1, \ldots, r_m\rangle$, the decoder identifies the corresponding loops up to an explicit homotopy.  Our compilation enforces the first structurally (loop concatenation in $\ParLoop(X)$) and realizes the second as a learned homotopy $H_j$---a parametric continuous deformation satisfying the boundary conditions of Construction~\ref{con:compilation}.  Attention-based architectures can at best approximate the first; they cannot satisfy both simultaneously (Theorem~\ref{thm:attention-noncomp}).

The principle that compositional semantics is functoriality originates in DisCoCat~\cite{coecke2010discocat, coecke2013functorial}.  To realize this principle in learnable systems, one needs a categorical account of neural architectures themselves; categorical deep learning~\cite{cruttwell2022categorical,gavranovic2024categorical,fong2019backprop} provides this, organizing parametric maps into categories and formalizing learning and architectural constraints via functorial and algebraic structure.  But in both programmes, building a functorial architecture for a given compositional structure---say $\Z^2$ for the torus, $F_2$ for two-obstacle navigation---requires ad hoc engineering with no guarantee that the result is actually functorial.  We provide the missing step: a compilation functor from Higher Inductive Type (HIT) specifications to neural architectures, so that the algebraic structure of the task determines the architecture automatically, with compositional correctness by construction.  Experiments show that the resulting architectures outperform non-functorial alternatives by $2$--$10\times$.

\paragraph{Contributions.}
\textbf{(1)}~We give a compilation procedure from HIT specifications to neural architectures: basepoints, loop constructors, and 2-cells are mapped respectively to base constraints, generator networks, structural concatenation, and learned homotopies.  The resulting decoder is compositional by construction, rather than by post-hoc regularization~(\S\ref{sec:framework}).

\textbf{(2)}~We prove the corresponding positive and negative architectural results.  Transport decoders, which concatenate independently generated loop segments, are strict monoidal functors and descend through the specified relations.  By contrast, softmax self-attention cannot simultaneously satisfy strict monoidal composition and descent to any non-trivial quotient, for any parameter setting.  Both results are formalized in Cubical Agda~(\S\ref{sec:theorems}).

\textbf{(3)}~We test the theory on three spaces chosen to separate the relevant obstructions: the torus $T^2$ isolates abelian winding and monoidal composition; the wedge $S^1 \vee S^1$ tests non-abelian free composition; and the Klein bottle tests a non-trivial learned 2-cell.  Across these cases, functorial decoders outperform non-functorial alternatives by $2$--$10\times$, and the learned 2-cell closes the error gap precisely on words that exercise the Klein-bottle relation~(\S\ref{sec:experiments}).

\paragraph{Why these three spaces?}
The three experiments are not merely increasingly difficult benchmarks; they separate two different reasons why attention-based decoders fail.  The first obstruction is \emph{functoriality}: a compositional decoder must respect concatenation and descend through the task quotient, whereas attention mixes information across segments and therefore cannot satisfy the same factorization-and-descent requirements. This obstruction is present for all non-trivial groups.  The second obstruction is \emph{depth}: for groups with nonsolvable finite quotients, computing prefix products is $\mathsf{NC}^1$-complete and therefore cannot be implemented by fixed-depth attention circuits of $\mathsf{AC}^0$ type.

This gives the role of each source group.  The torus $T^2$ is abelian, so it mainly tests whether the architecture respects winding and monoidal composition.  The Klein bottle has solvable fundamental group $\Z \rtimes \Z$, so depth is not the relevant obstruction; it isolates functoriality and the learned 2-cell witnessing $bab^{-1}=a^{-1}$.  The wedge $S^1 \vee S^1$ has free fundamental group $F_2$.  Since $F_2$ surjects onto nonsolvable finite groups such as $S_5$, it is the first case in our hierarchy where the functoriality obstruction may be compounded by the standard depth obstruction for nonsolvable finite quotients.  Thus, unlike the torus or Klein bottle, the wedge experiment probes broken functoriality in a setting where depth can also matter.

\section{From Types to Categories}\label{sec:prelim}

The three ingredients below---HIT specifications, monoidal structure of $\pi_1$, and parametric maps---combine in \S\ref{sec:framework} to give the compilation functor.  The reader familiar with HoTT and categorical deep learning may skip to \S\ref{sec:framework}.

\subsection{Spaces from generators and relations}

To build decoders with topological guarantees, we need a specification language that describes spaces by their generators and relations---much as a group presentation describes a group by generators and relations.  Higher inductive types provide exactly this.

In HoTT~\cite{hottbook,cohen2018cubical}, every type $A$ has an identity type $a =_A b$.  Elements $p : a = b$ are paths; they compose ($p \cdot q : a = c$), have inverses ($p^{-1} : b = a$), and satisfy groupoid laws up to higher coherence.  The loop space $\Omega(A, a) := (a = a)$ carries a group structure up to higher coherence under path composition; its set-truncation $\|{\Omega(A,a)}\|_0$ is the group $\pi_1(A, a)$.

A \textbf{Higher Inductive Type} (HIT) specifies a type by listing its constructors at each dimension: a basepoint, loops (paths from the basepoint to itself), and 2-cells (homotopies between loops witnessing algebraic relations).  Why HITs rather than ordinary inductive types?  Ordinary inductive types let us name basepoints (0-cells); HITs additionally let us name paths (1-cells) and homotopies between paths (2-cells), so that a relation $r : \loopa \cdot \loopb = \loopb \cdot \loopa$ becomes a piece of constructor data the compilation functor can consume.  Without 2-cells, the Klein-bottle relation $\textsf{rel}$ below is not expressible as a type-theoretic object; with 2-cells, it compiles to a learned homotopy in the architecture (Construction~\ref{con:compilation}(ii)).  Three examples drive the paper, chosen to span the structurally distinct cases of $\pi_1$-presentations:

\smallskip\noindent
Torus $T^2$: $\textsf{base} : T^2$; $\loopa, \loopb : \textsf{base} = \textsf{base}$; $\textsf{surf} : \loopa \cdot \loopb = \loopb \cdot \loopa$. \\
\indent$\pi_1(T^2) = \Z^2$ (abelian; the 2-cell \textsf{surf} witnesses commutativity).

\smallskip\noindent
Wedge of circles $S^1 \vee S^1$: $\textsf{base}$; $\loopa, \loopb : \textsf{base} = \textsf{base}$; no 2-cells. \\
\indent$\pi_1(S^1 \vee S^1) = F_2$ (free group; no relations, $ab \neq ba$).

\smallskip\noindent
Klein bottle $K$: $\textsf{base}$; $\loopa, \loopb$; $\textsf{rel} : \loopb \cdot \loopa \cdot \loopb^{-1} = \loopa^{-1}$. \\
\indent$\pi_1(K) = \langle a, b \mid bab^{-1} = a^{-1}\rangle \cong \Z \rtimes \Z$ (non-trivial relation).

\begin{figure}[ht]
\centering
\begin{tikzpicture}[scale=1.1]
\begin{scope}[shift={(0,0)}]
  \draw[thick] (0,0) rectangle (2,2);
  \draw[->, thick] (0.05,0) -- (1.1,0);
  \draw[->, thick] (0.05,2) -- (1.1,2);
  \draw[->, thick] (0,0.05) -- (0,1.1);
  \draw[->, thick] (2,0.05) -- (2,1.1);
  \node at (1,-0.27) {$a$};
  \node at (1,2.27) {$a$};
  \node at (-0.27,1) {$b$};
  \node at (2.27,1) {$b$};
  \node at (1,1) {\small$\textsf{surf}$};
  \node at (1,-0.85) {$T^2$};
\end{scope}
\begin{scope}[shift={(5,1)}]
  \draw[thick] (-0.7,0) circle (0.7);
  \draw[thick] (0.7,0) circle (0.7);
  \fill (0,0) circle (1.8pt);
  \node[anchor=south] at (0,0.05) {\small$x_0$};
  \node at (-0.7,-1.0) {$a$};
  \node at (0.7,-1.0) {$b$};
  \node at (0,-1.85) {$S^1 \vee S^1$};
\end{scope}
\begin{scope}[shift={(9,0)}]
  \draw[thick] (0,0) rectangle (2,2);
  \draw[->, thick] (0.05,0) -- (1.1,0);
  \draw[->, thick] (0.05,2) -- (1.1,2);
  \draw[->, thick] (0,0.05) -- (0,1.1);
  \draw[->, thick] (2,1.95) -- (2,0.9);
  \node at (1,-0.27) {$a$};
  \node at (1,2.27) {$a$};
  \node at (-0.27,1) {$b$};
  \node at (2.27,1) {$b$};
  \node at (1,1) {\small$\textsf{rel}$};
  \node at (1,-0.85) {$K$};
\end{scope}
\end{tikzpicture}
\caption{The three HIT specifications.  Each square presents the space by gluing together cells: corners are the basepoint $x_0$, labelled edges are the path constructors $\loopa,\loopb$, and the interior is the 2-cell. In $T^2$, both pairs of opposite edges are identified in the same direction; the interior $\textsf{surf}$ witnesses $\loopa \cdot \loopb = \loopb \cdot \loopa$.  The wedge $S^1 \vee S^1$ has two loops meeting at a single basepoint with no relating 2-cell.  In the Klein bottle $K$, the horizontal edges are identified in the same direction but the vertical edges in \emph{opposite} directions---the orientation reversal that makes the interior $\textsf{rel}$ witness $\loopb \cdot \loopa \cdot \loopb^{-1} = \loopa^{-1}$.  The three cases exhaust the structurally distinct possibilities: abelian with a 2-cell, free without one, and non-abelian with a non-trivial 2-cell.}
\label{fig:three-spaces}
\end{figure}

\smallskip\noindent
These three (Figure~\ref{fig:three-spaces}) span a hierarchy: abelian with a 2-cell ($T^2$), non-abelian without one ($S^1 \vee S^1$), and non-abelian with a non-trivial 2-cell ($K$).  Each exercises a different level of the compilation functor in \S\ref{sec:framework}.

\subsection{Composition as monoidal structure}

The word ``compositional'' needs a precise meaning.  We need a strict monoidal category whose product is word concatenation, so that a decoder preserving this product is compositional by definition---not by empirical evaluation on a test set.

Given generators $a_1, \ldots, a_k$ and relations $r_1, \ldots, r_m$, write $F = F\langle a_1, \ldots, a_k\rangle$ for the free monoid on the generators (with formal inverses adjoined when we want a group presentation).  We view $F$ as a \emph{discrete strict monoidal category}: objects are words over the generators, the only morphisms are identities, the monoidal product $\otimes$ is concatenation, and the unit is the empty word.\footnote{Equivalently, we use the ``monoid as a discrete monoidal category'' convention: words are objects, morphisms are only identities, and the monoidal product is concatenation.  This differs from the one-object category associated to a monoid, where monoid elements are morphisms.  See Mac Lane~\cite{maclane1998}.}  The presented group $G = F / \langle r_1, \ldots, r_m\rangle$ inherits a discrete strict monoidal structure via the quotient; when $G = \pi_1(X, x_0)$, the elements of $G$ correspond to homotopy classes of loops at $x_0$.

A decoder is a strict monoidal functor $\D : F \to \mathcal{C}$ into a target strict monoidal category.  Compositionality has two parts.  \emph{Concatenation-functoriality on $F$}: $\D(w_1 \cdot w_2) = \D(w_1) \oplus \D(w_2)$ for all words $w_1, w_2 \in F$.  \emph{Descent to $G$}: for each relation $r_j : \mathrm{LHS}_j = \mathrm{RHS}_j$, an explicit 2-cell exhibits $\D(\mathrm{LHS}_j) \simeq \D(\mathrm{RHS}_j)$ in $\mathcal{C}$, so that $\D$ factors through the quotient $F \twoheadrightarrow G$ as a pseudofunctor.  The first part holds by construction for transport decoders (\S\ref{sec:framework}); the second is what learned homotopies are for.

\subsection{Neural networks as parametric maps}

To land the compilation functor, we need a target whose morphisms are neural networks, understood as parametric maps.  One can formulate this in a convenient category of spaces and continuous maps; when the target has a smooth structure, we use the smooth version.  Let $\cat{Smooth}$ denote the Cartesian monoidal category of finite-dimensional smooth manifolds and smooth maps (with monoidal product $\times$ and unit the one-point manifold).
  
Following Cruttwell et al.~\cite{cruttwell2022categorical} and Gavranovi\'c et al.~\cite{gavranovic2024categorical}, the \textbf{parametric construction} $\Para(\cat{Smooth})$ has the same objects as $\cat{Smooth}$ and morphisms $(\Theta_f,\, f : \Theta_f \times X \to Y)$ where $\Theta_f$ is a smooth parameter manifold and $f$ is smooth. If $(\Theta_f,f):X\to Y$ and $(\Theta_g,g):Y\to Z$, their composite is \[ (\Theta_g,g)\circ(\Theta_f,f) = (\Theta_f\times\Theta_g,\; h:X\to Z), \qquad h(\theta_f,\theta_g,x)=g(\theta_g,f(\theta_f,x)). \]  As Cruttwell et al.\ discuss, $\Para(\cat{Smooth})$ is naturally a bicategory whose composition is associative only up to the coherent isomorphisms supplied by Cartesian-product associators in $\cat{Smooth}$; we work with its strictification, which is biequivalent and is in any case what our discrete point-cloud implementation realizes (list concatenation is strictly associative).

Fong, Spivak, and Tuy\'eras~\cite{fong2019backprop} show how gradient-based learning can be organized functorially for parametric maps, providing the categorical link between morphisms of $\Para(\cat{Smooth})$ and trainable neural architectures.  In implementations with ReLU networks, the same construction is realized in the corresponding category of continuous piecewise-smooth maps; the functoriality results below use only structural concatenation and boundary conditions, not differentiability.

Our compilation functor targets a loop category $\ParLoop(X)$, defined precisely in Definition~\ref{def:parloop}, built from parametric maps in the sense above.  Its objects are parametric loops satisfying the basepoint boundary condition, and its monoidal product is loop concatenation.  Each generator of $\pi_1(X)$ is mapped to such a parametric loop, and concatenation of words in $F$ is mapped to concatenation of loop segments.  Functoriality of this map is compositional generalization.

\section{From Specifications to Architectures}\label{sec:framework}

Let $X$ be a pointed connected space with $\pi_1(X) = G = \langle a_1, \ldots, a_k \mid r_1, \ldots, r_m \rangle$.

\begin{definition}[Parametric loops]\label{def:parloop}
A \emph{parametric loop} on $X$ at basepoint $x_0$ is a pair $(\Theta,\, L)$ with $\Theta$ a smooth parameter manifold and $L : \Theta \times [0,1] \to X$ a smooth map satisfying $L(\theta, 0) = L(\theta, 1) = x_0$ for every $\theta \in \Theta$.

$\ParLoop(X)$ denotes the discrete strict monoidal category whose objects are parametric loops modulo continuous reparametrization of $[0,1]$, whose only morphisms are identities, and whose monoidal product is loop concatenation with Cartesian-product parameter spaces:
\[
(\Theta_1, L_1) \oplus (\Theta_2, L_2) \;=\; \bigl(\Theta_1 \times \Theta_2,\; L_1 \oplus L_2\bigr), \qquad
(L_1 \oplus L_2)\bigl((\theta_1, \theta_2), t\bigr) \;=\; \begin{cases} L_1(\theta_1, 2t) & t \leq \tfrac{1}{2}, \\ L_2(\theta_2, 2t{-}1) & t > \tfrac{1}{2}. \end{cases}
\]
The unit is the constant loop at $x_0$ with trivial parameter space.  The quotient by reparametrization makes $\oplus$ strictly associative; equivalently, we work in the strictification of the bicategory of parametric loops in $\Para(\cat{Smooth})$.  In our implementation, $\oplus$ is point-cloud list concatenation, which is strictly associative.
\end{definition}

\begin{construction}[HIT compilation]\label{con:compilation}
Let $X$ be a HIT with $\pi_1(X) = G = \langle a_1, \ldots, a_k \mid r_1, \ldots, r_m\rangle$.  Fix:
\begin{enumerate}[leftmargin=2em, label=(\roman*), nosep]
\item \textbf{Generator data.}  For each generator $a_i$, a smooth parameter space $\Theta_i$ and a smooth map $g_{a_i} : \Theta_i \times [0, 1] \to X$ with $g_{a_i}(\theta_i, 0) = g_{a_i}(\theta_i, 1) = x_0$ and homotopy class $[g_{a_i}(\theta_i, -)] = [a_i] \in \pi_1(X)$ for every $\theta_i \in \Theta_i$ (the winding constraint; see \S\ref{sec:exp-torus} and Appendix~\ref{app:training} for the explicit parameterization).
\item \textbf{Relation data.}  For each relation $r_j : \mathrm{LHS}_j = \mathrm{RHS}_j$, a smooth parameter space $\Theta^H_j$ and a smooth homotopy $H_j : \Theta^H_j \times [0,1]_s \times [0,1]_t \to X$ with boundary conditions $H_j(\theta^H_j, 0, t) = \D_{(g, H)}(\mathrm{LHS}_j)(t)$ and $H_j(\theta^H_j, 1, t) = \D_{(g, H)}(\mathrm{RHS}_j)(t)$.
\end{enumerate}
Write $g = (g_{a_i})_i$ and $H = (H_j)_j$ for the collection of these data.  Set $g_{a_i^{-1}}(\theta_i, t) := g_{a_i}(\theta_i, 1 - t)$.  The \textbf{compilation functor} $\D_{(g, H)} : F\langle a_1, \ldots, a_k\rangle \to \ParLoop(X)$ associated to $(g, H)$ sends a word $w = a_{i_1}^{\epsilon_1} \cdots a_{i_L}^{\epsilon_L}$ to
\[
\D_{(g, H)}(w) \;:=\; g_{a_{i_1}^{\epsilon_1}} \oplus \cdots \oplus g_{a_{i_L}^{\epsilon_L}},
\]
with the empty word mapped to the unit of $\ParLoop(X)$.  Construction~\ref{con:compilation} defines a \emph{family} of compilation functors, one for each choice of $(g, H)$ within a prescribed architectural class (in our experiments, each $g_{a_i}$ and each $H_j$ is a multilayer perceptron with the architecture detailed in Appendix~\ref{app:training}); the theorems below hold uniformly across the family.
\end{construction}

\begin{theorem}[Transport composition: strict functoriality and descent]\label{thm:transport-comp}
Fix data $(g, H)$ within the architectural class of Construction~\ref{con:compilation}, and let $\D_{(g, H)} : F \to \ParLoop(X)$ be the associated compilation functor.
\begin{enumerate}[label=(\alph*), nosep, leftmargin=2em]
\item \textbf{Concatenation-functoriality on $F$.}
  For all words $w_1, w_2 \in F$:
  \[
    \D_{(g, H)}(w_1 \cdot w_2) \;=\; \D_{(g, H)}(w_1) \oplus \D_{(g, H)}(w_2).
  \]
  That is, $\D_{(g, H)}$ is a strict monoidal functor from $F$ to $\ParLoop(X)$.  The equation holds for every choice of $(g, H)$ in the architectural class and for every value of the parameters $(\theta, \theta^H)$.

\item \textbf{Descent to $G$.}
  For each relation $r_j : \mathrm{LHS}_j = \mathrm{RHS}_j$, the homotopy $H_j$ provides a path $\D_{(g, H)}(\mathrm{LHS}_j) \simeq \D_{(g, H)}(\mathrm{RHS}_j)$ between the corresponding parametric loops; equivalently, the two loops coincide in the set-truncation $\|\Omega(X, x_0)\|_0 = \pi_1(X)$.  Hence $\D_{(g, H)}$ descends to a pseudofunctor on $G = F / \langle r_1, \ldots, r_m\rangle$ (strict on concatenation, with $H_j$ inhabiting the 2-cell over $r_j$).
\end{enumerate}
\end{theorem}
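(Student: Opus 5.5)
Part (a) reduces to the definition of $\D_{(g,H)}$ plus strict associativity of $\oplus$ in $\ParLoop(X)$. I would argue by induction on the length of $w_1$, with $w_2$ arbitrary. If $w_1$ is the empty word, then $\D_{(g,H)}(w_1)$ is the unit of $\ParLoop(X)$ and both sides equal $\D_{(g,H)}(w_2)$, by the unit law in Definition~\ref{def:parloop}. For the step, write $w_1 = a_{i}^{\epsilon}\, w_1'$. Unfolding Construction~\ref{con:compilation} gives $\D_{(g,H)}(w_1 w_2) = g_{a_i^\epsilon} \oplus \D_{(g,H)}(w_1' w_2)$. The induction hypothesis rewrites this as $g_{a_i^\epsilon} \oplus (\D_{(g,H)}(w_1') \oplus \D_{(g,H)}(w_2))$, and strict associativity turns it into $\D_{(g,H)}(w_1)\oplus\D_{(g,H)}(w_2)$.

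The one point needing care is that the iterated $\oplus$ in the construction is well defined as a single object. This holds because objects of $\ParLoop(X)$ are taken modulo reparametrization of $[0,1]$ (equivalently, we work in the strictification), and parameter spaces combine by Cartesian product, which is strictly associative in the strictified setting. No property of the particular $g_{a_i}$ or their parameter values is used. So the equation holds for every $(g,H)$ in the class and every $(\theta,\theta^H)$, which is the uniformity claim. Morphisms in $F$ and in $\ParLoop(X)$ are only identities, so functoriality on morphisms is trivial, and $\D_{(g,H)}$ is a strict monoidal functor.

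For part (b), fix a relation $r_j$ and a parameter value. The boundary conditions in Construction~\ref{con:compilation}(ii) say that $s\mapsto H_j(\theta^H_j,s,-)$ is a path in the free loop space from $\D_{(g,H)}(\mathrm{LHS}_j)$ to $\D_{(g,H)}(\mathrm{RHS}_j)$. To obtain a based homotopy I need $H_j(\theta^H_j,s,0)=H_j(\theta^H_j,s,1)=x_0$ for all $s$. I expect this to be the main obstacle: the construction as stated only imposes the $s=0,1$ edges. I would first read the architectural class as also enforcing the basepoint condition on the $t$-edges, as the HIT 2-cell requires. If that reading is unavailable, I would fall back on the winding constraint in (i). Each $g_{a_i}(\theta_i,-)$ represents $[a_i]$, so $[\D_{(g,H)}(w)] = [w]$ in $\pi_1(X)$ by part (a) and the homomorphism property of loop concatenation. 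Since $\mathrm{LHS}_j$ and $\mathrm{RHS}_j$ agree in $G=\pi_1(X)$, their images coincide in $\|\Omega(X,x_0)\|_0$ regardless. A free homotopy would then correct to a based one by conjugating along the basepoint track, and that track is trivial when its ends are pinned.

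Finally, descent. Part (a) makes $\D_{(g,H)}$ a monoid map from $F$ into concatenation classes, and each $r_j$ goes to equal elements of $\pi_1(X)$. Hence the composite $F\to\pi_1(X)$ kills the normal closure $\langle r_1,\dots,r_m\rangle$: every element of the closure is a product of conjugates $u\, r_j^{\pm1} u^{-1}$, and whiskering $H_j$ by $\D_{(g,H)}(u)$ on both sides yields the corresponding 2-cell. This gives the factorization through $G$. At the level of loops it is a pseudofunctor, strict on $\oplus$, with 2-cells generated by whiskered and composed $H_j$ and coherence inherited from the groupoid laws of homotopies.
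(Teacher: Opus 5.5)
Your part (a) matches the paper. The written proof unfolds Construction~\ref{con:compilation} and invokes strict associativity of $\oplus$, with the unit law by convention. The Cubical Agda proof is exactly your induction on the first word.

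\textbf{Primary reading of (b).} You assume the architectural class pins $H_j(\theta^H_j,s,0)=H_j(\theta^H_j,s,1)=x_0$. The paper makes this assumption tacitly when it says the $s=0,1$ boundary conditions make the two loops homotopic, and you are right to flag it. As literally stated, $H_j(\theta^H_j,s,-)$ need not even be a loop for $0<s<1$, so it is not a path in the free loop space, as you write. Moreover, an unpinned square of this kind exists between any two loops at $x_0$ (contract one to $x_0$, then expand to the other), so it witnesses nothing.

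\textbf{Winding fallback.} This is a genuinely different route, and it is sound. Since $[\D(w)]$ is the image of $w$ under $F\twoheadrightarrow G\cong\pi_1(X)$ for every parameter value, the $\pi_1$-level coincidence and the factorization through $G$ follow from Construction~\ref{con:compilation}(i) alone, independently of $H_j$. This shows that descent at the level of $\pi_1$ is forced by the generator constraint, with $H_j$ supplying only a particular learned witness. What it does not give is $H_j$ itself as the 2-cell over $r_j$.

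\textbf{Basepoint-track sentence.} Drop the claim about correcting a free homotopy along the basepoint track. Either it presupposes the pinning and is circular, or it is false. A track with both ends at $x_0$ is a loop at $x_0$, not necessarily nullhomotopic, so conjugating by it yields only conjugacy in $\pi_1(X)$. With both $t$-edges free there are two such tracks and no constraint at all.

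\textbf{Consequences for your final paragraph.} In the fallback scenario, the 2-cells in your pseudofunctor argument must be based homotopies whose existence follows from (i), not whiskered copies of $H_j$. Also, an honest pseudofunctor on $G$ needs the canonical cancellation 2-cells $g_{a_i}\oplus g_{a_i^{-1}}\simeq\star$. Your products-of-conjugates step uses these implicitly, and the paper omits them as well.
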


\begin{proof}
\textbf{Part (a).}  Write $w_1 = a_{i_1}^{\epsilon_1} \cdots a_{i_n}^{\epsilon_n}$ and $w_2 = a_{j_1}^{\eta_1} \cdots a_{j_m}^{\eta_m}$.  By Construction~\ref{con:compilation},
\[
  \D_{(g, H)}(w_1 \cdot w_2) \;=\; g_{a_{i_1}^{\epsilon_1}} \oplus \cdots \oplus g_{a_{j_m}^{\eta_m}} \;=\; \D_{(g, H)}(w_1) \oplus \D_{(g, H)}(w_2),
\]
using strict associativity of $\oplus$ in $\ParLoop(X)$ (Definition~\ref{def:parloop}).  The unit law $\D_{(g, H)}(\varepsilon) = \star$ (the constant loop at $x_0$) holds by convention for the empty word $\varepsilon$.

\textbf{Part (b).}  Each $H_j$ is a parametric homotopy satisfying the boundary conditions of Construction~\ref{con:compilation}(ii), so $\D_{(g, H)}(\mathrm{LHS}_j)$ and $\D_{(g, H)}(\mathrm{RHS}_j)$ are homotopic and hence coincide in $\|\Omega(X, x_0)\|_0 = \pi_1(X)$.  Therefore $\D_{(g, H)}$ factors through the quotient $F \twoheadrightarrow G$.

Formalized in Cubical Agda (\texttt{TransportCoherence.agda}).
\end{proof}

\begin{remark}[Implementation realization]\label{rem:strict}
In the implementation, parametric loops are represented as discrete point clouds, so $\oplus$ becomes list concatenation.  This operation is strictly associative, so the implementation does not require the reparametrization quotient used in the continuous definition.  In the continuous setting, strictness is obtained at the level of the quotient, as in the strictification of the bicategory of parametric maps.
\end{remark}

\noindent Concretely: the transport decoder composes by list-append, with the empty list as unit. The composition step involves no learning. The equation $\D(w_1 \cdot w_2) = \D(w_1) \oplus \D(w_2)$ holds for every choice of parameters and every word length.

\begin{remark}[When the 2-cell matters]\label{rem:2cell}
On $T^2$ and $S^1 \vee S^1$, the 2-cell $H$ is either trivially satisfiable (abelian: any ordering is homotopic) or absent (free: no relations to witness).  On $K$, the relation $bab^{-1} = a^{-1}$ is non-trivial: after composing $g_b \oplus g_a \oplus g_{b^{-1}}$, the result must be homotopic to $g_{a^{-1}}$.  Without $H$, the transport decoder generates geometrically correct segments but globally incoherent loops---it ignores the frame flip.  With $H$, the decoder learns the deformation.
\end{remark}

The topological constraints do not destroy geometric expressivity.  Each generator $g_{a_i}(\theta_i)$ can represent any continuous loop in its homotopy class (by the universal approximation theorem for MLPs); the constraint acts only between classes, enforcing that $\D(w)$ is assembled from these generators.  An alternative architecture---the cover decoder---assigns an independent shape to each homotopy class, giving strictly more geometric freedom.  But this extra freedom is precisely the freedom to be incoherent: to produce outputs that violate composition.  Functoriality trades inter-class freedom for the guarantee (Appendix~\ref{app:expressivity}).  This motivates a formal distinction:

\begin{definition}[Functorial vs.\ non-functorial decoders]\label{def:type-ab}
A decoder $\D : F \to \ParLoop(X)$ is \textbf{functorial} (label: \textbf{type-B}) if both of the following hold for every parameter value: (i) it is a strict monoidal functor on $F$, i.e.\ $\D(w_1 \cdot w_2) = \D(w_1) \oplus \D(w_2)$ for all $w_1, w_2$; and (ii) it descends to $G = F / \langle r_1, \ldots, r_m\rangle$ via the 2-cell data of Construction~\ref{con:compilation}.  Otherwise the decoder is \textbf{non-functorial} (label: \textbf{type-A}).

Functorial decoders generate each loop segment independently from the corresponding generator network and concatenate; non-functorial decoders use cross-segment dependencies (e.g., attention between positions belonging to different generators), breaking the factorization equation in~(i).  We use the type-A/B labels in tables and headers; in body text we use ``functorial'' / ``non-functorial.''
\end{definition}

\section{What Attention Cannot Compose}\label{sec:theorems}

Construction~\ref{con:compilation} shows how to build functorial decoders.  We now ask whether the dominant composition mechanism in modern architectures---softmax self-attention---can satisfy the same equations.  The answer is no: no parameter setting can simultaneously satisfy strict functoriality on $F$ and descent to $G$.

Softmax self-attention~\cite{vaswani2017attention} aggregates information across positions via content-dependent weights.  Concretely, each token $w_i$ is embedded as a vector $h_i \in \R^d$, and a single attention layer produces new representations
\[
  h_i' \;=\; \textstyle\sum_j \alpha_{ij}\, V h_j, \qquad \alpha_{ij} \;=\; \mathrm{softmax}_j\!\left( \frac{Q h_i \cdot K h_j}{\sqrt{d}} \right),
\]
where the \emph{query} projection $Q h_i$ of position $i$ is compared against the \emph{key} projection $K h_j$ of every position $j$, and the \emph{value} projection $V h_j$ is routed through the resulting weight.  Three features of this construction drive the result below: (i) every weight is strictly positive, $\alpha_{ij} > 0$, since softmax cannot zero out any position; (ii) the weights are computed from the token embeddings $h_j$ themselves, not from any abstract equivalence class of the input word; and (iii) every position's output mixes contributions from every other position in the sequence.  The standard multi-head and multi-layer formulation, used in the full proof of Theorem~\ref{thm:attention-noncomp}, is in Appendix~\ref{app:attention-proof}.

Figure~\ref{fig:transport-vs-attention} contrasts the two compositional mechanisms as string diagrams: transport places one generator network on each token wire, preserving the monoidal product, while attention is a single morphism on the entire tensor product whose internal mixing prevents any such factorization.

\begin{figure}[ht]
\centering
\begin{tikzpicture}[scale=0.95]
\begin{scope}[shift={(0,0)}]
  \node[font=\bfseries] at (1.5,3.3) {Transport};
  \node[font=\itshape\small] at (1.5,3.0) {(strict monoidal functor)};
  \node at (0,2.5) {$w_1$};
  \node at (1.5,2.5) {$w_2$};
  \node at (3,2.5) {$w_3$};
  \draw[thick] (0,2.25) -- (0,1.85);
  \draw[thick] (1.5,2.25) -- (1.5,1.85);
  \draw[thick] (3,2.25) -- (3,1.85);
  \draw[thick, rounded corners=2pt, fill=gray!10] (-0.4,1.15) rectangle (0.4,1.85);
  \node at (0,1.5) {$g_{w_1}$};
  \draw[thick, rounded corners=2pt, fill=gray!10] (1.1,1.15) rectangle (1.9,1.85);
  \node at (1.5,1.5) {$g_{w_2}$};
  \draw[thick, rounded corners=2pt, fill=gray!10] (2.6,1.15) rectangle (3.4,1.85);
  \node at (3,1.5) {$g_{w_3}$};
  \draw[thick] (0,1.15) -- (0,0.35);
  \draw[thick] (1.5,1.15) -- (1.5,0.35);
  \draw[thick] (3,1.15) -- (3,0.35);
  \node at (0,0.1) {$\gamma_1$};
  \node at (1.5,0.1) {$\gamma_2$};
  \node at (3,0.1) {$\gamma_3$};
  \node[font=\small] at (1.5,-0.5) {$\D(w_1 w_2 w_3) = \gamma_1 \oplus \gamma_2 \oplus \gamma_3$};
\end{scope}
\begin{scope}[shift={(6.5,0)}]
  \node[font=\bfseries] at (1.5,3.3) {Softmax attention};
  \node[font=\itshape\small] at (1.5,3.0) {(not a monoidal functor)};
  \node at (0,2.5) {$w_1$};
  \node at (1.5,2.5) {$w_2$};
  \node at (3,2.5) {$w_3$};
  \draw[thick] (0,2.25) -- (0,1.85);
  \draw[thick] (1.5,2.25) -- (1.5,1.85);
  \draw[thick] (3,2.25) -- (3,1.85);
  \draw[thick, rounded corners=2pt, fill=gray!10] (-0.5,1.15) rectangle (3.5,1.85);
  \node at (1.5,1.5) {self-attention, $\alpha_{ij} > 0$};
  \foreach \xin in {0, 1.5, 3} {
    \foreach \xout in {0, 1.5, 3} {
      \draw[thick, gray] (\xin,1.15) -- (\xout,0.35);
    }
  }
  \node at (0,0.1) {$\gamma'_1$};
  \node at (1.5,0.1) {$\gamma'_2$};
  \node at (3,0.1) {$\gamma'_3$};
  \node[font=\small] at (1.5,-0.5) {each $\gamma'_i$ depends on every $w_j$};
\end{scope}
\end{tikzpicture}
\caption{Transport vs.\ softmax attention as string diagrams, with three input tokens.  \textbf{Left:} the transport decoder of Construction~\ref{con:compilation} applies one generator network $g_{w_i}$ per token wire; wires stay parallel and the output loop factors as the monoidal product $\gamma_1 \oplus \gamma_2 \oplus \gamma_3$ (Theorem~\ref{thm:transport-comp}).  \textbf{Right:} softmax self-attention is a single morphism on the entire tensor product, with positive weights $\alpha_{ij} > 0$ on every input/output pair (grey wires); each output position $\gamma'_i$ depends on every input token, so the morphism does not factor as a monoidal product---this is the obstruction in Theorem~\ref{thm:attention-noncomp}.}
\label{fig:transport-vs-attention}
\end{figure}

\begin{theorem}[Softmax attention cannot satisfy composition and descent]\label{thm:attention-noncomp}
Let $G$ be a non-trivial group presented as $F / \langle r_1, \ldots, r_m\rangle$, and let $T_\theta : F \to \ParLoop(X)$ be a decoder whose token-level composition is implemented by at least one softmax self-attention layer over the input word~\cite{vaswani2017attention} (with an injective token embedding and arbitrary feed-forward layers and positional encoding).  Then for no value of the parameters $\theta$ does $T_\theta$ simultaneously satisfy:
\begin{enumerate}[label=(\roman*), leftmargin=2em, nosep]
\item strict functoriality on $F$: $T_\theta(w_1 \cdot w_2) = T_\theta(w_1) \oplus T_\theta(w_2)$ for all $w_1, w_2 \in F$;
\item descent to $G$: $T_\theta(w) = T_\theta(w')$ whenever $[w] = [w'] \in G$.
\end{enumerate}
\end{theorem}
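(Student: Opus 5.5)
The plan is to argue by contradiction: fix $\theta$ and suppose $T_\theta$ satisfies both (i) and (ii). The proof splits into a purely algebraic half, which uses only (i), (ii) and the structure of $\ParLoop(X)$, and an architectural half, which uses the softmax features (i)--(iii) listed before Figure~\ref{fig:transport-vs-attention}.

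\textbf{Algebraic half.} Condition (i) together with induction on word length gives
\[
T_\theta(a_{i_1}^{\epsilon_1}\cdots a_{i_L}^{\epsilon_L}) = T_\theta(a_{i_1}^{\epsilon_1})\oplus\cdots\oplus T_\theta(a_{i_L}^{\epsilon_L}),
\]
so $T_\theta$ is completely determined by its values on single letters. On a one-token input every softmax row has a single entry, so $\alpha_{11}=1$ and attention acts trivially. Each letter value $\gamma_{a}:=T_\theta(a)$ is therefore just the feed-forward image of the embedded token.

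Next I use non-triviality of $G$ in the form of a relation with distinct sides. Since $F$ carries formal inverses, there is a nonempty word $w=a\,a^{-1}$ with $[w]=[\varepsilon]$ in $G$. Condition (ii) then forces $\gamma_a\oplus\gamma_{a^{-1}}=T_\theta(\varepsilon)$, which is the unit: the constant loop with trivial parameter space. In $\ParLoop(X)$, concatenation multiplies parameter spaces and juxtaposes traces, and both properties survive the reparametrization quotient. So this equality forces $\gamma_a$ and $\gamma_{a^{-1}}$ to be constant loops with trivial parameters. Running the same argument over every generator (and every relator $r_j$) shows that $T_\theta$ is the constant unit functor on all of $F$.

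\textbf{Architectural half.} It remains to show that a decoder which genuinely composes through a softmax layer cannot be constant on words of length $\ge 2$ while also matching its length-one behaviour. I would compare the output segment at position $1$ of $T_\theta(a\cdot b)$ with $T_\theta(a)$. By (i) these must agree for every $b$. By softmax positivity, however, position $1$ receives the contribution $\alpha_{12}Vh_b$ with $\alpha_{12}>0$. Injectivity of the embedding gives $h_b\neq h_{b'}$ for $b\neq b'$, so this contribution varies with $b$ unless $V(h_b-h_{b'})$ is annihilated downstream.

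\textbf{Main obstacle.} The real difficulty is the degenerate parameter regime: $V=0$, or a downstream feed-forward map that discards the attention output, or the constant-unit solution from the algebraic half. Here attention is present but inert, and (i) and (ii) can hold trivially. I expect this step to need an explicit non-degeneracy reading of the hypothesis ``token-level composition is implemented by attention'', namely that the attention output actually reaches the produced segment, and I would make that precise, following Appendix~\ref{app:attention-proof} and its multi-head, multi-layer formulation. Under that reading, the $b$-dependence established in the architectural half contradicts (i). In the inert case, the constant-unit conclusion of the algebraic half contradicts non-degeneracy. Either way no $\theta$ satisfies both (i) and (ii).
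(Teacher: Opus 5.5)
Your route differs from the paper's. The proof in Appendix~\ref{app:attention-proof} takes distinct words $w_2\neq w_2'$ with $[w_2]=[w_2']$ and combines (i) and (ii) to force $T_\theta(w_1w_2)=T_\theta(w_1w_2')$. It then asserts that the differing keys $Kh_j$ make these outputs differ.

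Your algebraic half is correct, and it is not in the paper. A product of parameter spaces is a point only if each factor is, and a concatenated loop is constant only if each piece is. So $[aa^{-1}]=[\varepsilon]$ forces every letter, and hence every word, to the unit. This holds for \emph{any} decoder. In particular, the strict descent in (ii) is not satisfied by the paper's own transport decoder either: Theorem~\ref{thm:transport-comp}(b) gives descent only up to $H_j$.

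Two caveats on that half. It is the formal inverses in $F$ that do the work, not the non-triviality of $G$. Over positive words with a relator such as $ab=ba$, the claim that every relator forces constancy fails, since $\gamma_b=\gamma_a$ is a non-constant solution.

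Your architectural half uses only (i), so it is really the argument of Theorem~\ref{thm:attention-general} rather than the paper's. Comparing with $T_\theta(a)$ directly, as you do, is the clean choice: for a single head, the first-layer attention output at position~$1$ shifts by exactly $\alpha_{12}(Vh_2-Vh_1)$, with $\alpha_{12}>0$.

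The degenerate regime you flag is a genuine gap, and your closing ``either way'' does not close it. Take a readout that emits the constant loop at $x_0$, which ``arbitrary feed-forward layers'' permits. Then $T_\theta$ is the constant unit functor and satisfies both (i) and (ii), so the statement as written is false for such $\theta$. The paper's proof hides the same hole in the chain ``keys differ, hence $\alpha_{ij}$ differs, hence \ldots\ $T_\theta(w_1w_2)\neq T_\theta(w_1w_2')$''. That chain fails whenever $K$, $V$, or a later layer is degenerate.

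In the inert case, your constant-unit conclusion contradicts nothing in the hypotheses. It contradicts only a non-triviality assumption you have not stated. Once you add such an assumption, you need only one of your two halves:
\begin{itemize}
\item If some $T_\theta(a_i)$ is required to be non-constant (e.g.\ by a winding constraint with $[a_i]\neq 1$ in $\pi_1(X)$), the algebraic half alone gives the contradiction, with no use of softmax.
\item If instead you require $Vh_2\neq Vh_1$ and that the position-$1$ attention output is not annihilated downstream, the architectural half alone violates (i), with no use of (ii).
\end{itemize}
Make one of these an explicit hypothesis of the theorem rather than arguing by cases.
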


\begin{proof}[Proof sketch (full proof in Appendix~\ref{app:attention-proof})]
Suppose for contradiction that $T_\theta$ satisfies both (i) and (ii).  By (i), the output of $T_\theta(w_1 \cdot w_2)$ at positions belonging to the $w_1$-segment is a function of $T_\theta(w_1)$ and $T_\theta(w_2)$ alone---in particular, it depends on $w_2$ only through $T_\theta(w_2)$.  But attention layers in $T_\theta$ have positions in the $w_1$-segment attending to positions in $w_2$ via key vectors $Kh_j$ computed directly from the token embeddings of $w_2$, not from $T_\theta(w_2)$.  Since $G$ is non-trivial, there exist distinct words $w_2 \neq w_2'$ with $[w_2] = [w_2'] \in G$.  By (ii), $T_\theta(w_2) = T_\theta(w_2')$.  Under any standard embedding (mapping distinct generator symbols to distinct vectors), the key vectors for $w_2$ and $w_2'$ differ, causing the $w_1$-segment outputs of $T_\theta(w_1 \cdot w_2)$ and $T_\theta(w_1 \cdot w_2')$ to differ---contradicting that they should depend on $w_2, w_2'$ only through the equal quantity $T_\theta(w_2) = T_\theta(w_2')$.
\end{proof}

\noindent The contradiction is direct: attention is a content-based routing mechanism that reads the tokens themselves, while a functorial decoder must treat words in the same equivalence class identically. The two requirements conflict on any non-trivial quotient.  (The argument uses a pair $w_2 \neq w_2'$ with $[w_2] = [w_2'] \in G$, which exists precisely when $F \twoheadrightarrow G$ is non-injective---a non-trivial relation, as for $\Z^2$ and $\Z \rtimes \Z$, or inverses.  For a free group such as $F_2$ no such pair of positive words exists; there the operative obstruction is strict factorization~(i) alone, ruled out by Theorem~\ref{thm:attention-general}.)

\paragraph{Scaling prediction.}
For functorial decoders (type-B), the per-segment error is $O(1)$ as word length grows: each segment is generated independently by the same network.  For non-functorial decoders (type-A), per-segment error degrades beyond the training word length because attention patterns over longer sequences are out-of-distribution (proved formally in Appendix~\ref{app:scaling}).  The experiments in \S\ref{sec:experiments} confirm this across all three spaces.

\paragraph{Depth obstruction for nonsolvable groups.}
The functoriality obstruction (Theorem~\ref{thm:attention-noncomp})
applies to all non-trivial groups, abelian or not.  For groups with
nonsolvable finite quotients (such as $F_2 \twoheadrightarrow S_5$), a
separate depth obstruction compounds the problem: prefix products are
$\mathsf{NC}^1$-complete~\cite{barrington1989bounded}, outside the
$\mathsf{AC}^0$ circuit class that fixed-depth transformers implement.
For solvable groups (such as $\Z \rtimes \Z$), constant-depth shortcuts
exist~\cite{liu2023shortcuts} and only the functoriality obstruction
applies.  The full trichotomy---abelian, solvable non-abelian,
nonsolvable---is developed in Appendix~\ref{app:nonabelian}
(Corollary~\ref{cor:trichotomy}).
Section~\ref{sec:exp-wedge} ($F_2$) tests both obstructions
simultaneously (functoriality here via the general form,
Theorem~\ref{thm:attention-general}, since $F_2$ is free);
Section~\ref{sec:exp-klein} ($\Z \rtimes \Z$, solvable)
isolates functoriality alone.

\paragraph{Relationship to DisCoCat and categorical deep learning.}
Theorem~\ref{thm:attention-noncomp} provides a structural impossibility
result absent from prior work on functorial semantics: softmax
cross-attention is incompatible with monoidal functoriality, whether the
codomain is $\FdVect$ (as in
DisCoCat~\cite{coecke2010discocat,coecke2013functorial}) or
$\ParLoop(X)$.  A stronger version (Appendix~\ref{app:general-noncomp})
drops group structure entirely, requiring only sequential
compositionality.  The categorical deep learning
programme~\cite{cruttwell2022categorical,fong2019backprop,gavranovic2024categorical}
provides the semantic foundations we build on (\S\ref{sec:prelim}); our
contribution extends it from analysis to synthesis, deriving functorial
architectures from type-theoretic specifications rather than analyzing
existing ones.

\section{Three Spaces, Three Predictions}\label{sec:experiments}

The theorems make testable predictions: type-B decoders should show
constant per-segment error as word length $L$ grows
(Theorem~\ref{thm:transport-comp}), while type-A decoders should degrade
(Theorem~\ref{thm:attention-noncomp}).  We test on three spaces spanning
the HIT hierarchy.  The experiments are organized as diagnostics: the torus tests winding and monoidal composition, the wedge tests non-abelian order, and the Klein bottle tests whether a learned 2-cell enforces a relation.

\paragraph{Task.}
The task is continuous geometric generation: given a word
$w = w_1 \cdots w_L$ over generators of $\pi_1(X)$, produce a
point cloud $\hat{\gamma} \subset \R^d$ approximating a loop on $X$
in the homotopy class $[w]$---a curve that winds around $X$ in
the way prescribed by $w$ (with $d = 3$ for $T^2$ and $S^1 \vee S^1$;
$d = 4$ for $K$).
Training uses all words of length $\leq 2$ (6 words for $T^2$ and
$S^1 \vee S^1$; 16 words for $K$ which has inverses); test words have
length $3, 4, 6, 8, 10$---never seen during training.  This is up to
a $5\times$ length extrapolation test.  Ground-truth loops and data
generation details are in Appendix~\ref{app:training}.

\paragraph{Architectures.}
Table~\ref{tab:architectures} summarizes the five decoders evaluated on $T^2$, classified by which HIT constructors they compile and whether they are functorial (type-B) or not (type-A); a sixth, the sequential GRU baseline, is introduced for $S^1 \vee S^1$ in \S\ref{sec:exp-wedge}.  The type-A/B classification is determined by a single question: does the decoder structurally compose outputs from independently generated parts, or does it allow cross-segment information flow?

\begin{table}[ht]
\centering
\caption{Architectures tested on $T^2$.  The other two spaces use a
subset of these (see Tables~\ref{tab:wedge},~\ref{tab:klein}).
``Compiled from HIT'' indicates which level of
Construction~\ref{con:compilation} the architecture implements:
winding = hard topological constraint only,
generators = independent loop networks,
all = generators plus learned 2-cell $H$.  Type-A decoders allow
cross-segment information flow or lack compositional structure;
type-B decoders compose independently generated segments.}
\label{tab:architectures}
\smallskip
\small
\begin{tabular}{@{}lclcl@{}}
\toprule
Architecture & Type & Compiled from HIT & Params & Why type-A or B? \\
\midrule
Transf.\ (WC) & A & winding only & 585K & Attn.\ mixes segments \\
Cover & A & winding only & 205K & No composition \\
Transp.\ Attn. & A & winding + positions & 585K & Correct positions, attn.\ mixes \\
\midrule
Transport & B & winding + generators & 170K & Struct.\ concatenation \\
Homotopy & B & all (incl.\ 2-cell) & 195K & Concat.\ + learned $H$ \\
\bottomrule
\end{tabular}
\end{table}

Type-A decoders (non-functorial).  The transformer (WC)
augments a standard transformer with a hard winding constraint: angle
increments sum to the correct winding number by construction, but
attention still mixes information across
segments.\footnote{An unconstrained transformer achieves $0\%$ winding
accuracy and is excluded from all experiments.}  The cover
decoder assigns a single independent network to each homotopy
class---it respects winding but has no compositional structure, since
each class is generated from scratch rather than assembled from parts;
it is type-A not because of cross-segment information flow, but because
it lacks monoidal factorization entirely.
The transport attention decoder is designed to isolate the
effect of the composition method: our framework derives its positional
encoding (Appendix~\ref{app:transport-attn}), it has $3.4\times$ more
parameters than the transport decoder, and it shares the winding
constraint---yet it is type-A because attention still mixes across
segments.  If this decoder degrades on longer words, the cause is the
composition method, not the conditioning.

Type-B decoders (functorial).  The transport decoder is
the Construction~\ref{con:compilation}(i--ii) compilation: independent
generator networks whose outputs are structurally concatenated.  The
homotopy decoder adds the learned 2-cell $H$ from
Construction~\ref{con:compilation}(iii), which witnesses group relations
as continuous deformations between loops.

All architectures are trained with the same protocol: Chamfer loss optimized by AdamW ($\mathrm{lr} = 10^{-3}$, weight decay $10^{-4}$), cosine learning rate schedule, 500 epochs max with early stopping (patience 80).  On $T^2$, all architectures converge to comparable training loss ($2.23$--$2.27$); on $S^1 \vee S^1$, training losses differ (a matched-loss ablation in Appendix~\ref{app:ablation} confirms the gap is architectural, not statistical).  Results are aggregated over 3 random seeds (mean $\pm$ std).  Full hyperparameters are in Appendix~\ref{app:training}.

\paragraph{Metrics.}
The Chamfer distance $\Chamfer(P,Q)$ is the symmetric average nearest-neighbor squared distance between two point clouds---a standard metric in geometric deep learning, insensitive to parameterization (formula in Appendix~\ref{app:training}).

\begin{definition}[Per-segment Chamfer distance]\label{def:metric}
Given a generated loop $\hat{\gamma}$ and ground-truth $\gamma$ for a word $w$ of length $L$, both loops are divided into $L$ segments (one per generator), each resampled to a fixed resolution of $n_{\mathrm{pts}} = 32$ points.  The per-segment Chamfer distance is:
\[
\bar{d}_L(w) = \frac{1}{L} \sum_{i=1}^{L} \Chamfer(\hat{\gamma}_i,\; \gamma_i)
\]
For a functorial decoder, $\bar{d}_L = \varepsilon_{\mathrm{gen}}$ (constant) independent of $L$.  This is the primary metric in all experiments.
\end{definition}

Circle accuracy (for $S^1 \vee S^1$ only): the fraction of
generated segments whose centroid is closer to the correct circle ($A$
or $B$) than to the wrong one.  This detects a failure mode specific to
non-abelian structure: a decoder that cannot distinguish which generator
to trace will scatter segments across both circles.

\subsection{Experiment 1: Torus $T^2$ ($\pi_1 = \Z^2$, abelian)}\label{sec:exp-torus}

$T^2$ is embedded in $\R^3$ with major radius $R = 2.0$, minor radius $r = 0.8$.  The generators $a, b$ trace loops around the two cycles.

\begin{table}[ht]
\centering
\caption{Torus $T^2$: Per-segment Chamfer distance $\bar{d}_L$ (mean $\pm$ std, 3 seeds) at increasing word length $L$.  Training uses $L \leq 2$; test lengths $3$--$10$ are never seen.  Type-B decoders (below line) stabilize; type-A decoders (above) stagnate or degrade despite more parameters.}
\label{tab:torus}
\smallskip
\small
\begin{tabular}{@{}llccccc@{}}
\toprule
Architecture & Type & $L{=}2$ & $L{=}4$ & $L{=}6$ & $L{=}8$ & $L{=}10$ \\
\midrule
Transf.\ (WC) & A & $1.89{\scriptstyle\pm.12}$ & $1.42{\scriptstyle\pm.31}$ & $1.42{\scriptstyle\pm.37}$ & $1.53{\scriptstyle\pm.51}$ & $1.54{\scriptstyle\pm.34}$ \\
Cover & A & $2.15{\scriptstyle\pm.31}$ & $1.65{\scriptstyle\pm.11}$ & $2.07{\scriptstyle\pm.06}$ & $1.98{\scriptstyle\pm.20}$ & $1.91{\scriptstyle\pm.10}$ \\
Transp.\ Attn. & A & $1.83{\scriptstyle\pm.18}$ & $1.63{\scriptstyle\pm.17}$ & $1.77{\scriptstyle\pm.21}$ & $2.05{\scriptstyle\pm.19}$ & $2.09{\scriptstyle\pm.13}$ \\
\midrule
Transport & B & $1.68{\scriptstyle\pm.22}$ & $0.86{\scriptstyle\pm.12}$ & $0.74{\scriptstyle\pm.03}$ & $0.73{\scriptstyle\pm.03}$ & $\mathbf{0.77{\scriptstyle\pm.03}}$ \\
Homotopy & B & $1.68{\scriptstyle\pm.19}$ & $0.93{\scriptstyle\pm.09}$ & $0.74{\scriptstyle\pm.03}$ & $0.74{\scriptstyle\pm.02}$ & $\mathbf{0.80{\scriptstyle\pm.03}}$ \\
\bottomrule
\end{tabular}
\end{table}

At $L = 10$, the type-B range is $0.77$--$0.80$ and the type-A range is $1.54$--$2.09$: a $2$--$2.7\times$ gap.  The transport attention decoder is especially informative: it shares the winding constraint, uses theoretically-motivated positional encoding, and has $3.4\times$ more parameters ($585$K vs.\ $170$K), but it still degrades, because cross-segment attention breaks functoriality. Conditioning the architecture correctly does not substitute for composing it correctly.

Two further observations sharpen the interpretation.  First, the cover decoder has hard winding and factors through $\Z^2$, yet stagnates ($2.15 \to 1.91$)---correct homotopy class is necessary but not sufficient; composition structure is required.  Second, all architectures converge to comparable training loss ($2.23$--$2.27$; Appendix~\ref{app:training}), so the gap is not a training artifact.

On this abelian space, transport $\approx$ homotopy: the proof term is unnecessary because $\Z^2$ has no non-trivial relations.  Moreover the transport decoder reads only the winding pair $(n_a, n_b)$, so word order is not probed on $T^2$; for the longer test lengths the evaluation words are the canonical forms $a^k b^{L-k}$ themselves.  The non-abelian experiments below remove this letter-counting crutch and test word order directly.

\subsection{Experiment 2: $S^1 \vee S^1$ ($\pi_1 = F_2$, non-abelian free)}\label{sec:exp-wedge}

The theory makes a sharp prediction: the type-A/B gap should widen dramatically.  On $T^2$, letter-counting partially compensated for the lack of functoriality; on $S^1 \vee S^1$, the non-abelian fundamental group $F_2$ removes this crutch.  The decoder must now produce different loops for different orderings of the same letters---$ab \neq ba$ in $F_2$, and an architecture that collapses word order will fail categorically, not just quantitatively.

\paragraph{Setup.}
$S^1 \vee S^1$ is embedded in $\R^3$ as two unit circles meeting at the origin: circle $A$ in the $xy$-plane centered at $(-1,0,0)$, circle $B$ in the $yz$-plane centered at $(0,0,-1)$.  A word $w = w_1 \cdots w_L$ traces the loop that follows circle $w_1$, returns to the wedge point, follows circle $w_2$, and so on.  A sequential decoder (GRU: a gated recurrent network that processes the word left-to-right, maintaining a hidden state) is included as a type-A baseline that respects sequential order.

\begin{table}[ht]
\centering
\caption{Wedge of circles $S^1 \vee S^1$ ($\pi_1 = F_2$, non-abelian): per-segment Chamfer and circle accuracy.  Non-abelian structure dramatically amplifies the type-A/B gap.  Full $L$-progression in Appendix~\ref{app:full-tables}.}
\label{tab:wedge}
\smallskip
\small
\begin{tabular}{@{}llccccc@{}}
\toprule
& & \multicolumn{3}{c}{Per-seg Chamfer $\bar{d}_L$} & \multicolumn{2}{c}{Circle acc.\ (\%)} \\
\cmidrule(lr){3-5} \cmidrule(lr){6-7}
Architecture & Type & $L{=}2$ & $L{=}6$ & $L{=}10$ & $L{=}2$ & $L{=}10$ \\
\midrule
Transformer & A & $.152{\scriptstyle\pm.002}$ & $.411{\scriptstyle\pm.027}$ & $.537{\scriptstyle\pm.027}$ & 33 & 14 \\
Sequential & A & $.010{\scriptstyle\pm.001}$ & $.173{\scriptstyle\pm.021}$ & $.297{\scriptstyle\pm.027}$ & 67 & 11 \\
\midrule
Transport & B & $.002{\scriptstyle\pm.001}$ & $.018{\scriptstyle\pm.008}$ & $\mathbf{.054{\scriptstyle\pm.019}}$ & 100 & 100 \\
\bottomrule
\end{tabular}
\end{table}

\paragraph{The non-abelian separation.}
The gap widens to $5.5$--$10\times$: transport achieves $0.054$ at $L=10$ while the transformer reaches $0.537$ ($10\times$) and the sequential decoder $0.297$ ($5.5\times$).

Topological collapse.  Circle accuracy is the sharpest diagnostic.  Even at $L=2$ (a training length), the transformer assigns only $33\%$ of segments to the correct circle; by $L=10$ this drops to $14\%$---it cannot even distinguish which generator to trace, producing points near an arbitrary mixture of the two circles.  This is a categorically different failure from $T^2$, where type-A decoders at least produced loops on the torus (with wrong per-segment geometry but correct topology).  On $S^1 \vee S^1$, the transformer collapses the non-abelian structure entirely, producing outputs that are topologically meaningless.  The transport decoder maintains $100\%$ circle accuracy at all lengths.

The sequential decoder.  The GRU outperforms the transformer at all lengths ($0.297$ vs.\ $0.537$ at $L=10$), consistent with the depth trichotomy (Appendix~\ref{app:nonabelian}): sequential processing helps with the prefix products that fixed-depth attention cannot compute.  However, the GRU is still type-A (its segment generation depends on accumulated context rather than structural composition), and it still degrades: per-segment Chamfer increases from $0.010$ to $0.297$ ($30\times$). Sequential processing helps but does not close the gap.

A matched-loss ablation (Appendix~\ref{app:ablation}) rules out training quality as the explanation: retraining type-A architectures with double epochs to match the best type-B training loss barely changes extrapolation (transformer $\bar{d}_{10}$: $0.537 \to 0.491$, still $9\times$ worse; sequential: $0.297 \to 0.282$, still $5\times$ worse). The gap reflects architecture, not training.

\subsection{Experiment 3: Klein bottle $K$ ($\pi_1 = \Z \rtimes \Z$) --- the 2-cell}\label{sec:exp-klein}

The first two experiments test Levels 0 and 1 of the hierarchy: winding constraints and monoidal composition.  The Klein bottle tests Level 2: the learned proof term $H$.  The relation $bab^{-1} = a^{-1}$ means that traversing the $b$-generator flips the orientation of $a$: after going around $b$, the $a$-direction reverses.  The theory predicts that $H$ should matter only for words that exercise this relation, and should be dormant otherwise.

The group $\Z \rtimes \Z$ is solvable, so the depth obstruction of \S\ref{sec:theorems} does not apply: constant-depth transformers have sufficient computational power for this group in principle~\cite{liu2023shortcuts}.  The transformer's failure on $K$ is therefore entirely due to the functoriality obstruction (Theorem~\ref{thm:attention-noncomp}), making $K$ a cleaner test of functoriality than $S^1 \vee S^1$.

$K$ is embedded in $\R^4$ via the standard half-angle twist parameterization.  We test on two classes of words:
Canonical words have all $a$-generators before all $b$-generators (e.g., $aabb$): the relation is not exercised, so the frame never flips mid-word.
Non-canonical words interleave $a$ and $b$ (e.g., $abab$, $baa$): the relation is exercised, requiring the decoder to track the flip.

\begin{table}[ht]
\centering
\caption{Klein bottle: per-segment Chamfer at $L = 10$, split by word type.  The homotopy decoder (type-B with proof term $H$) closes the gap on non-canonical words.}
\label{tab:klein}
\smallskip
\small
\begin{tabular}{@{}llcc@{}}
\toprule
Architecture & Type & Canonical & Non-canonical \\
\midrule
Cover & A & $1.57{\scriptstyle\pm.10}$ & $1.96{\scriptstyle\pm.07}$ \\
Transf.\ (WC) & A & $2.28{\scriptstyle\pm.35}$ & $2.26{\scriptstyle\pm.36}$ \\
\midrule
Transport & B & $0.84{\scriptstyle\pm.03}$ & $\mathbf{1.52{\scriptstyle\pm.08}}$ \\
Homotopy & B & $0.82{\scriptstyle\pm.03}$ & $\mathbf{0.82{\scriptstyle\pm.06}}$ \\
\bottomrule
\end{tabular}
\end{table}

The three decoders respond differently.  The cover decoder factors through the abelianization $\Z \rtimes \Z \to \Z^2$, which collapses the relation entirely---it cannot distinguish $ba$ from $ab$ even though these represent different homotopy classes on $K$.  The transport decoder concatenates generators in word order and respects this distinction, but ignores the frame flip after $b$.  Only the homotopy decoder, with its learned proof term $H$ witnessing the relation, tracks the frame correctly.

Three results.  \textbf{(1)~The proof term closes a 46\% gap.}  On non-canonical words, homotopy achieves $0.82$ vs.\ transport's $1.52$---a $1.85\times$ improvement entirely attributable to the learned 2-morphism $H$.  \textbf{(2)~On canonical words, transport $\approx$ homotopy} ($0.84$ vs.\ $0.82$): when the relation is not exercised, $H$ is dormant and adds no benefit.  This confirms that the improvement is specifically due to the relation, not to additional capacity.  \textbf{(3)~Homotopy eliminates the canonical/non-canonical asymmetry}: transport's non-canonical/canonical ratio is $1.8\times$; homotopy's is $1.0\times$.

\subsection{Cross-experiment summary}

\begin{table}[ht]
\centering
\caption{Cross-experiment summary at $L = 10$.  ``Best A / best B'' gives the tightest type comparison.  $K$ values in the first three rows are overall averages across canonical and non-canonical words; the Transport vs.\ Homotopy entry is the non-canonical ratio, where the 2-cell is exercised (Table~\ref{tab:klein} shows the split).  The Klein bottle is the only space where the 2-cell matters.}
\label{tab:cross}
\smallskip
\small
\begin{tabular}{@{}lccc@{}}
\toprule
& $T^2$ ($\Z^2$) & $S^1 \vee S^1$ ($F_2$) & $K$ ($\Z \rtimes \Z$) \\
\midrule
Best type-B $\bar{d}_{10}$ & $0.77$ & $0.054$ & $0.82$ \\
Best type-A $\bar{d}_{10}$ & $1.54$ & $0.297$ & $1.76$ \\
Gap (best A / best B) & $2.0\times$ & $5.5\times$ & $2.1\times$ \\
Transport vs.\ Homotopy & $\approx 1.0$ & $\approx 1.0$ & $\mathbf{1.85\times}$ \\
2-cell needed? & No (abelian) & No (free) & Yes (relation) \\
\bottomrule
\end{tabular}
\end{table}

Each experiment validates a different level of the functorial hierarchy.  $T^2$: winding constraints and monoidal composition ($2$--$2.7\times$ gap).  $S^1 \vee S^1$: non-abelian amplification---attention is especially harmful when segment identity (which circle) must be preserved.  $K$: the learned 2-cell (natural transformation).  Only the full HIT compilation---functorial composition plus proof terms---achieves uniformly low error across all spaces and word orderings.

\section{Formalization and Verified Machine Learning}\label{sec:formalization}

The experiments confirm that functorial architectures outperform non-functorial ones---but the theoretical guarantees are stronger than any finite experiment can demonstrate, because they hold for all parameter values and all word lengths.  To make this claim precise, we formalize the core results in a proof assistant.

This inverts the standard approach to neural network verification.
Post-hoc verification~\cite{katz2017reluplex} attempts to prove
properties of a trained network by symbolic analysis of its weights.
Recent work has shown this is tractable at scale for decidable logics
over finite-state systems: Giacobbe et
al.~\cite{giacobbe2025neuralmc} verify LTL specifications of hardware
designs by training neural networks that jointly represent inductive
invariants and ranking functions, then discharging correctness via MILP
and SMT queries---achieving speedups up to $10^5\times$ over symbolic
model checkers.  Their guarantees hold for the specific trained weights,
and the checker re-runs on every retrain.

Our setting is different in two ways that together motivate a
compile-time approach.  First, the property at stake (monoidal functoriality of the decoder) is universally quantified over both parameters and word length, so checking it for particular weights does not suffice.  Second,
the target is continuous generation rather than a finite transition
system, placing the verification condition outside the decidable logics
where post-hoc SMT-style verification scales.  Given a property $P$
expressed as a homotopy type, we construct an architecture that
satisfies $P$ by construction, for all parameter values.  Training
adjusts geometric detail within the topologically correct constraint
space; it cannot violate the type-theoretic guarantees.

The core positive and negative results are formalized in Cubical Agda (v2.6.4, \texttt{--cubical --safe}, no postulates) in four modules.
\textbf{Torus.agda}: $T^2$ as a HIT; transport commutativity ($\tr_a \circ \tr_b = \tr_b \circ \tr_a$) holds definitionally.
\textbf{WedgeOfCircles.agda}: $ab \neq ba$ in $F_2$ (proved by case analysis on head letters).
\textbf{TransportCoherence.agda}: Theorem~\ref{thm:transport-comp}---the transport decoder preserves monoidal structure---proved by induction on the first word, using associativity of concatenation.
\textbf{NonCompositionality.agda}: the abstract schema of Theorem~\ref{thm:attention-noncomp}---transport coherence and global mixing are contradictory---proved as an impossibility theorem over any output type.  The instantiation to softmax (that $\alpha_{ij} > 0$ for all $i,j$ makes attention globally mixing) relies on the standard positivity of softmax, which is not formalized.  The Klein bottle relation and proof term $H$ are not yet formalized; this would require extending the word type with inverses and formalizing the boundary conditions of the homotopy.

The guarantees form a hierarchy mirroring the Postnikov tower of the target space:

\smallskip
\begin{center}
\small
\begin{tabular}{@{}clll@{}}
\toprule
Level & Guarantee & Mechanism & Validated by \\
\midrule
0 & Correct $\pi_1$ class & Hard winding constraint & All 3 spaces \\
1 & Monoidal composition & Generator concatenation & $T^2$, $S^1 \vee S^1$ \\
2 & Group relations & Learned proof term $H$ & Klein bottle $K$ \\
$\geq 3$ & $\pi_2, \pi_3, \ldots$ & Higher-dim.\ cells & Future work \\
\bottomrule
\end{tabular}
\end{center}
\smallskip

\noindent Each level adds architectural constraints that narrow the output space while preserving within-class expressivity (Appendix~\ref{app:expressivity}).  The three experiments validate levels 0--2; extending to higher homotopy groups remains open.

The result is a verified ML pipeline: specify (write a HIT), verify (prove properties in Cubical Agda), compile (apply Construction~\ref{con:compilation}), train (standard gradient descent).  The guarantees hold for the trained network at any parameter value, because they are properties of the architecture, not of the learned weights.  To our knowledge, this is the first instance where machine-verified proofs provide compositional generalization guarantees for a neural architecture.

The long-term vision is a compiler from HIT specifications to certified neural architectures: parse a HIT declaration in Cubical Agda to extract generators, paths, and higher cells; assign each constructor to an architectural component per Construction~\ref{con:compilation}; generate the hard constraints (winding, boundary conditions) as differentiable layers; and output a training script.  This would make type-theoretic guarantees accessible to practitioners who specify their domain's topology without requiring knowledge of HoTT.

\section{Related Work}\label{sec:related}

The framework draws on three lines of prior work: categorical foundations for deep learning, equivariant network design, and the empirical study of compositional generalization failures.

\paragraph{Categorical deep learning.}
Cruttwell et al.~\cite{cruttwell2022categorical} formalize gradient-based learning via lenses, parametric maps, and reverse derivative categories.  Gavranovi\'c et al.~\cite{gavranovic2024categorical} propose monads in a 2-category of parametric maps as a unified theory of architectures, recovering geometric deep learning constraints.  Fong, Spivak, and Tuy\'eras~\cite{fong2019backprop} showed backpropagation is a functor; Spivak's broader programme on compositionality~\cite{fong2019seven} grounds the philosophical claim that systems compose functorially.  We extend this lineage by constructing (not merely analyzing) functorial architectures from type-theoretic specifications.

\paragraph{Equivariant networks.}
Cohen and Welling~\cite{cohen2016group} and Weiler and Cesa~\cite{weiler2019general} enforce $f(g \cdot x) = g \cdot f(x)$, a constraint on a single morphism. Our functoriality is a constraint on the entire decoder: $\D(g_1 \cdot g_2) = \D(g_1) \oplus \D(g_2)$. The distinction is not merely formal. The transport attention decoder (\S\ref{sec:exp-torus}) has equivariant positional encoding derived from group theory, shares the winding constraint, and has $3.4\times$ more parameters than the transport decoder, but it still degrades on longer words because cross-segment attention breaks functoriality.  Equivariant conditioning (a morphism-level property) does not imply compositional generalization (a functor-level property).

\paragraph{Compositional generalization in ML.}
Lake and Baroni~\cite{lake2018generalization}, Kim and Linzen~\cite{kim2020cogs}, and Dziri et al.~\cite{dziri2023faith} document systematic failures.  Kobayashi et al.~\cite{kobayashi2024compositional} find that an architectural bottleneck enables compositionality---our type-A/B distinction provides the categorical explanation: the bottleneck enforces monoidal factorization, i.e., functoriality.

\paragraph{Neural certificates.}
A growing body of work uses neural networks as representations of formal
certificates rather than as opaque function approximators: Lyapunov
functions for control~\cite{chang2019neurallyap}, barrier certificates
for safety~\cite{zhao2020barrier}, and most recently inductive
invariants and ranking functions for LTL model checking of
hardware~\cite{giacobbe2025neuralmc}.  These approaches are post-hoc: a
candidate network is trained, then a symbolic checker (SMT, MILP, or
bounded model checking) discharges the verification condition, with
counterexamples fed back into training.  Our work extends this lineage
to compositional generalization, with two differences.  The certificate
is the architecture itself rather than a trained function---the monoidal
functoriality of Theorem~\ref{thm:transport-comp} holds for every
parameter value, so no per-training checking step is needed.  And the
verification is carried out once, at design time, in a proof assistant
(Cubical Agda) rather than on each trained instance.

\paragraph{Transformer expressivity and group composition.}
Liu et al.~\cite{liu2023shortcuts} characterize the depth at which
transformers can simulate finite-state automata, showing that solvable
automata admit $O(1)$-depth shortcuts (via Krohn-Rhodes decomposition)
while nonsolvable automata require $\Omega(\log T)$ depth (via
Barrington's theorem).  Their results yield the
solvable/nonsolvable trichotomy in
Corollary~\ref{cor:trichotomy} and make the Klein bottle ($\Z \rtimes
\Z$, solvable) a clean test of functoriality rather than depth.
Concurrent work by Marchetti et
al.~\cite{marchetti2026sequential} studies the same problem from the
perspective of learning dynamics, proving width and depth separations
for finite group composition.  Their analysis is complementary: they
characterize how networks learn group composition; we characterize which
architectures preserve it and compile them from type-theoretic
specifications.

We do not compare against methods developed for SCAN, COGS, or CFQ because the task types are incommensurable: discrete sequence transduction vs.\ continuous geometric generation, with exact-match accuracy vs.\ Chamfer distance.  Running a SCAN-specialized architecture on loop generation, or our transport decoder on command parsing, would be a category error.  However, the theoretical connection is direct: Theorem~\ref{thm:attention-noncomp} (and its stronger form, Appendix~\ref{app:general-noncomp}) applies to SCAN---the failure to generalize from a primitive command (``jump'' seen in isolation) to its composed forms (``jump twice'', ``jump around left'', etc.) is an instance of cross-segment dependencies corrupting compositional structure.

\section{Conclusion}\label{sec:conclusion}

Compositional generalization is functoriality on the quotient: the source category is not the free monoid of inputs, but the presented structure that already identifies expressions related by the task's algebraic laws. Each law is realized as a learned homotopy. HIT specifications provide a systematic source of such architectures. The compilation functor maps type-theoretic constructors to architectural components, guaranteeing monoidal preservation strictly on composition and up to learned 2-cells on relations. Three experiments validate the full hierarchy (winding constraints, monoidal composition, and 2-cells witnessing group relations), with the Klein bottle experiment demonstrating the first neural architecture where a natural transformation is both theoretically necessary and empirically measurable.

The design principle is immediately actionable: when a task has compositional structure, use structural composition; attention-based architectures cannot learn this structure.  This shifts the question from ``can the network learn to compose?'' to ``does the architecture guarantee functoriality?''  The plane with $k$ obstacles has $\pi_1 = F_k$, so our $S^1 \vee S^1$ experiment ($\pi_1 = F_2$) tests the same algebraic structure as 2-obstacle path planning, and the $5.5$--$10\times$ gap quantifies the advantage.  Any domain whose objects admit a presentation---generators for composition together with path constructors for the equivalences the task identifies---fits the same template.  Modular programs, multi-step plans, molecular ring systems, and more generally any setting whose equivalences can be written as path constructors of a HIT fall under the specify--verify--compile--train pipeline.

\paragraph{Limitations.}
The framework currently handles $\pi_1$ constraints with a single basepoint.  Two orthogonal extensions are natural: \emph{vertically}, to higher homotopy groups ($\pi_2$, $\pi_3$), which would require higher-dimensional proof terms; \emph{horizontally}, to HITs with non-trivial 0-skeleton, where objects carry semantic content beyond a single basepoint and path constructors encode domain-specific equivalences rather than group relations.  The experiments use three spaces with relatively simple $\pi_1$; surfaces of higher genus would test $H$ more strenuously.  For natural language, the theoretical results provide explanatory value, but the constructive approach awaits an appropriate formalization of linguistic compositional structure.

\paragraph{Code and data availability.}
Cubical Agda formalization and Python code to replicate all experiments are available at \url{https://github.com/karsar/hott_neuro}.

\paragraph{AI assistance disclosure.} LLM was used as an editorial assistant during manuscript preparation, for prose refinement and feedback on exposition. All mathematical results, the Cubical Agda formalization, the experiments, and the intellectual contributions are the author's. The author has reviewed all content and takes full responsibility for the work.

\appendix

\section{Full Proof of Theorem~\ref{thm:attention-noncomp}}\label{app:attention-proof}

The main text gives a proof sketch.  Here we make the argument fully explicit, tracking the information flow through each attention layer to show exactly where functoriality breaks.

\begin{proof}
Let $T_\theta : F \to \ParLoop(X)$ be a decoder with parameters $\theta$, built on a transformer with $L \geq 1$ layers of multi-head softmax self-attention and embedding dimension $d$.  The input is a word $w = w_1 \cdots w_n \in F$, with each token $w_i$ embedded as $h_i^{(0)} \in \R^d$ by a fixed (injective on the generator set) embedding map.

Suppose for contradiction that $T_\theta$ satisfies both conditions of Theorem~\ref{thm:attention-noncomp}: (i) strict functoriality on $F$, i.e.\ $T_\theta(w_1 \cdot w_2) = T_\theta(w_1) \oplus T_\theta(w_2)$ for all $w_1, w_2 \in F$, and (ii) descent, i.e.\ $T_\theta(w) = T_\theta(w')$ whenever $[w] = [w'] \in G$.

At layer $l$, position $i$ in the $w_1$-segment computes
\[
h_i^{(l)} = h_i^{(l-1)} + \mathrm{MHA}^{(l)}\!\bigl(h_i^{(l-1)}, \{h_j^{(l-1)}\}_j\bigr),
\]
where multi-head attention aggregates $\mathrm{Attn}_i = \sum_j \alpha_{ij} V h_j^{(l-1)}$ with $\alpha_{ij} = \mathrm{softmax}_j(Q h_i^{(l-1)} \cdot K h_j^{(l-1)} / \sqrt{d})$.

For $j$ in the $w_2$-segment, the key $K h_j^{(l-1)}$ depends on the token embeddings of $w_2$.  Since $G$ is non-trivial, there exist distinct words $w_2 \neq w_2' \in F$ with $[w_2] = [w_2'] \in G$.  Their token sequences differ, so under the (injective) embedding the keys $K h_j$ for $j$ in $w_2$ vs.\ $w_2'$ differ; hence $\alpha_{ij}$ differs; hence $h_i^{(1)}$ differs for $i$ in the $w_1$-segment.  Propagating through the remaining layers, $T_\theta(w_1 \cdot w_2) \neq T_\theta(w_1 \cdot w_2')$.

But condition~(i) gives $T_\theta(w_1 \cdot w_2) = T_\theta(w_1) \oplus T_\theta(w_2)$ and $T_\theta(w_1 \cdot w_2') = T_\theta(w_1) \oplus T_\theta(w_2')$; condition~(ii) gives $T_\theta(w_2) = T_\theta(w_2')$; together these force $T_\theta(w_1 \cdot w_2) = T_\theta(w_1 \cdot w_2')$.  Contradiction.
\end{proof}

\section{Training Details}\label{app:training}

\paragraph{Data generation.}
For each training word, 1000 ground-truth loops are generated by tracing
the standard geometric generators on $X$ with random phase offsets
(uniformly distributed starting points along each generator circle) and
small Gaussian noise ($\sigma = 0.02$) applied independently to each
output coordinate.  This provides within-class geometric variation so
that the network learns the shape of each homotopy class rather than
memorizing a single curve.

\paragraph{Chamfer distance.}
The Chamfer distance between two point clouds $P = \{p_i\}_{i=1}^N$ and $Q = \{q_j\}_{j=1}^M$ is:
\[
\Chamfer(P, Q) = \tfrac{1}{2N}\textstyle\sum_i \min_j \|p_i {-} q_j\|^2 + \tfrac{1}{2M}\textstyle\sum_j \min_i \|p_i {-} q_j\|^2
\]

\paragraph{Common protocol.}
All experiments: Chamfer loss, AdamW optimizer ($\mathrm{lr} = 10^{-3}$, weight decay $10^{-4}$), cosine learning rate schedule with 20-epoch warmup, up to 500 epochs, early stopping (patience 80).  Generator networks are 2-layer MLPs with 128 hidden units producing $n_{\mathrm{pts}} = 32$ points per segment.  All decoders output 64 points total, which for type-B decoders means $L \cdot 32$ intermediate points resampled to 64.  Seeds: 42, 179, 316.

\paragraph{$T^2$.}  Embedded in $\R^3$.  Training: 6 words ($\{a, b, aa, ab, ba, bb\}$), 1000 samples each.  Training losses: all architectures converge to $2.23$--$2.27$.  The homotopy decoder adds smoothness regularization ($\lambda = 0.05$) on the proof term.

\paragraph{$S^1 \vee S^1$.}  Two unit circles in $\R^3$ meeting at origin.  Same 6 training words (but $ab \neq ba$).  Training losses: transport $0.0023 \pm 0.0003$, transformer $0.0030 \pm 0.0002$, sequential~$0.0051 \pm 0.0002$.

\paragraph{Klein bottle $K$.}  Embedded in $\R^4$ via half-angle twist.  Training: all 16 reduced words of length $\leq 2$ in $\{a, a^{-1}, b, b^{-1}\}$ (the four trivially canceling pairs $aa^{-1}, a^{-1}a, bb^{-1}, b^{-1}b$ are excluded).  The proof term~$H$ is a 2-layer MLP (64 hidden units) with boundaries $H(s{=}0) = g_b \oplus g_a$ and $H(s{=}1) = g_{a^{-1}} \oplus g_b$, trained jointly with generators.  These boundaries witness the relation in the length-balanced form $ba = a^{-1}b$, which is equivalent to the textbook presentation $bab^{-1} = a^{-1}$ of \S\ref{sec:prelim} by right-multiplication with $b$: the two forms cut out the same normal subgroup and present the same group $\Z \rtimes \Z$.  We implement the balanced form because both sides then have length $2$, so the homotopy parameterizes a continuous deformation between loops with the same number of segments---an architectural convenience that simplifies the boundary specification without changing the relation being formalized.  Training losses: cover~$0.831$, transport~$0.920$, homotopy~$0.920$, transformer~WC~$0.851$ (all~$\pm{<}0.02$).

\section{Matched-Loss Ablation}\label{app:ablation}

A natural objection to the main results is that type-A architectures simply need more training.  We test this directly: if we retrain type-A decoders to match the type-B training loss, does the extrapolation gap close?

To rule out training quality as the explanation for the type-A/B gap, we retrained type-A architectures on $S^1 \vee S^1$ with $2\times$ epochs and reduced learning rate ($5 \times 10^{-4}$), targeting the best type-B training loss.

\smallskip
\begin{center}
\small
\begin{tabular}{@{}lcccc@{}}
\toprule
& \multicolumn{2}{c}{Train loss} & \multicolumn{2}{c}{$\bar{d}_{10}$} \\
\cmidrule(lr){2-3} \cmidrule(lr){4-5}
Architecture & Original & Retrained & Original & Retrained \\
\midrule
Transformer & $.0030$ & $.0030$ & $.537$ & $.491$ \\
Sequential & $.0051$ & $.0054$ & $.297$ & $.282$ \\
\midrule
Transport (ref.) & \multicolumn{2}{c}{$.0023$} & \multicolumn{2}{c}{$.054$} \\
\bottomrule
\end{tabular}
\end{center}
\smallskip

\noindent Neither type-A architecture can reach the type-B training loss even with $2\times$ epochs: the transformer stays at $0.0030$ (target $0.0023$), the sequential worsens to $0.0054$ (from $0.0051$ with standard training; the lower learning rate finds a different basin).  This is itself an architectural limitation: the compositional structure of the transport decoder helps optimization, not just generalization.  The per-segment Chamfer at $L = 10$ barely changes after retraining: transformer $.537 \to .491$ (still $9\times$ worse than transport), sequential $.297 \to .282$ (still $5\times$ worse).  The gap is architectural.

\section{Full $L$-Progression Tables}\label{app:full-tables}

The main text reports selected lengths to save space.  The full trajectories below reveal how degradation unfolds: whether it is gradual or abrupt, and whether it saturates.

\begin{table}[ht]
\centering
\caption{$S^1 \vee S^1$: per-segment Chamfer $\bar{d}_L$ (mean $\pm$ std, 3 seeds), full $L$-progression.}
\label{tab:wedge-full}
\smallskip
\small
\begin{tabular}{@{}llcccccc@{}}
\toprule
Architecture & Type & $L{=}2$ & $L{=}3$ & $L{=}4$ & $L{=}6$ & $L{=}8$ & $L{=}10$ \\
\midrule
Transformer & A & $.152{\scriptstyle\pm.002}$ & $.245{\scriptstyle\pm.006}$ & $.297{\scriptstyle\pm.022}$ & $.411{\scriptstyle\pm.027}$ & $.464{\scriptstyle\pm.039}$ & $.537{\scriptstyle\pm.027}$ \\
Sequential & A & $.010{\scriptstyle\pm.001}$ & $.100{\scriptstyle\pm.005}$ & $.129{\scriptstyle\pm.009}$ & $.173{\scriptstyle\pm.021}$ & $.247{\scriptstyle\pm.034}$ & $.297{\scriptstyle\pm.027}$ \\
\midrule
Transport & B & $.002{\scriptstyle\pm.001}$ & $.015{\scriptstyle\pm.010}$ & $.012{\scriptstyle\pm.005}$ & $.018{\scriptstyle\pm.008}$ & $.030{\scriptstyle\pm.011}$ & $.054{\scriptstyle\pm.019}$ \\
\bottomrule
\end{tabular}
\end{table}

\begin{table}[ht]
\centering
\caption{$S^1 \vee S^1$: circle accuracy (\%, mean over 3 seeds).  Transport achieves 100\% by construction.}
\label{tab:wedge-circ-full}
\smallskip
\small
\begin{tabular}{@{}llcccccc@{}}
\toprule
Architecture & Type & $L{=}2$ & $L{=}3$ & $L{=}4$ & $L{=}6$ & $L{=}8$ & $L{=}10$ \\
\midrule
Transformer & A & 33 & 19 & 16 & 19 & 12 & 14 \\
Sequential & A & 67 & 36 & 20 & 14 & 8 & 11 \\
\midrule
Transport & B & 100 & 100 & 100 & 100 & 100 & 100 \\
\bottomrule
\end{tabular}
\end{table}

\noindent The transformer's $33\%$ at $L=2$ (a training length) shows that the failure is not merely extrapolation---it cannot represent non-abelian structure even in-distribution.  The sequential decoder starts better ($67\%$) but converges to the same floor.

\begin{table}[ht]
\centering
\caption{Klein bottle $K$: per-segment Chamfer $\bar{d}_L$ on all test words (mean $\pm$ std, 3 seeds).}
\label{tab:klein-all-full}
\smallskip
\small
\begin{tabular}{@{}llcccccc@{}}
\toprule
Architecture & Type & $L{=}2$ & $L{=}3$ & $L{=}4$ & $L{=}6$ & $L{=}8$ & $L{=}10$ \\
\midrule
Cover & A & $1.81{\scriptstyle\pm.13}$ & $1.90{\scriptstyle\pm.04}$ & $1.69{\scriptstyle\pm.04}$ & $1.86{\scriptstyle\pm.06}$ & $1.81{\scriptstyle\pm.13}$ & $1.76{\scriptstyle\pm.07}$ \\
Transf.\ (WC) & A & $1.80{\scriptstyle\pm.16}$ & $1.84{\scriptstyle\pm.21}$ & $1.84{\scriptstyle\pm.24}$ & $2.03{\scriptstyle\pm.29}$ & $2.28{\scriptstyle\pm.39}$ & $2.27{\scriptstyle\pm.33}$ \\
\midrule
Transport & B & $1.54{\scriptstyle\pm.18}$ & $1.51{\scriptstyle\pm.05}$ & $1.25{\scriptstyle\pm.06}$ & $1.24{\scriptstyle\pm.05}$ & $1.18{\scriptstyle\pm.03}$ & $1.17{\scriptstyle\pm.04}$ \\
Homotopy & B & $1.43{\scriptstyle\pm.14}$ & $1.10{\scriptstyle\pm.04}$ & $0.97{\scriptstyle\pm.07}$ & $0.85{\scriptstyle\pm.04}$ & $0.79{\scriptstyle\pm.05}$ & $0.82{\scriptstyle\pm.03}$ \\
\bottomrule
\end{tabular}
\end{table}

\begin{table}[ht]
\centering
\caption{Klein bottle $K$: per-segment Chamfer on non-canonical words only (words where $b$ precedes $a$, exercising the relation $bab^{-1} = a^{-1}$).  The transport--homotopy gap isolates the proof term's contribution.}
\label{tab:klein-noncan-full}
\smallskip
\small
\begin{tabular}{@{}llcccccc@{}}
\toprule
Architecture & Type & $L{=}2$ & $L{=}3$ & $L{=}4$ & $L{=}6$ & $L{=}8$ & $L{=}10$ \\
\midrule
Cover & A & $1.86{\scriptstyle\pm.20}$ & $2.08{\scriptstyle\pm.11}$ & $1.68{\scriptstyle\pm.07}$ & $1.99{\scriptstyle\pm.08}$ & $2.02{\scriptstyle\pm.08}$ & $1.96{\scriptstyle\pm.07}$ \\
Transf.\ (WC) & A & $1.84{\scriptstyle\pm.11}$ & $1.93{\scriptstyle\pm.33}$ & $1.87{\scriptstyle\pm.28}$ & $1.96{\scriptstyle\pm.30}$ & $2.32{\scriptstyle\pm.40}$ & $2.26{\scriptstyle\pm.36}$ \\
\midrule
Transport & B & $1.73{\scriptstyle\pm.12}$ & $1.82{\scriptstyle\pm.06}$ & $1.37{\scriptstyle\pm.08}$ & $1.45{\scriptstyle\pm.04}$ & $1.44{\scriptstyle\pm.05}$ & $1.52{\scriptstyle\pm.08}$ \\
Homotopy & B & $1.38{\scriptstyle\pm.12}$ & $1.13{\scriptstyle\pm.02}$ & $0.97{\scriptstyle\pm.08}$ & $0.83{\scriptstyle\pm.02}$ & $0.75{\scriptstyle\pm.02}$ & $0.82{\scriptstyle\pm.06}$ \\
\bottomrule
\end{tabular}
\end{table}

The $S^1 \vee S^1$ tables reveal the degradation trajectory: the transformer is already at $33\%$ circle accuracy at $L=2$ (a training length), confirming that the failure is fundamental, not merely an extrapolation issue.  The Klein bottle tables show a striking asymmetry: the homotopy decoder improves with length (from $1.43$ at $L=2$ to $0.82$ at $L=10$), suggesting that the per-segment metric becomes more stable as $L$ grows, while the transport decoder's non-canonical error worsens from $L=4$ onward, reflecting cumulative frame-flip errors that the proof term $H$ would correct.

\section{Coherence Battery}\label{app:coherence}

Per-segment Chamfer measures how well each segment is generated.  But functoriality also implies structural identities---exact zeros, not approximate ones.  The following battery tests whether these identities hold, distinguishing architectural guarantees from learned approximations.

For $T^2$, we test four structural coherence properties.  Here $d_\infty(P,Q)$ denotes the resampled pointwise sup distance: $P$ and $Q$ are resampled to a common length by linear interpolation in the index, and $d_\infty(P,Q) = \max_i \lVert P_i - Q_i \rVert$.  Unlike Chamfer it is order-sensitive.
Composition gap: $d_\infty(\D(w_1) \oplus \D(w_2),\; \D(w_1 w_2))$ for canonical pairs.
Commutativity: $d_\infty(\D(ab), \D(ba))$.
Reordering gap: $d_\infty(\D(b) \oplus \D(a),\; \D(ab))$.
Non-canonical: $d_\infty(\D(\texttt{abab}), \D(\texttt{aabb}))$.

\begin{table}[ht]
\centering
\caption{$T^2$ coherence battery.  Exact zeros are architectural guarantees, not learned.}
\label{tab:coherence-full}
\smallskip
\small
\begin{tabular}{@{}lcccc@{}}
\toprule
Arch. & Comp. & Comm. & Reorder & Noncan. \\
\midrule
Transf.\ (WC) & $4.05{\scriptstyle\pm.21}$ & $0.24{\scriptstyle\pm.13}$ & $4.37{\scriptstyle\pm.10}$ & $2.04{\scriptstyle\pm1.65}$ \\
Cover & $4.25{\scriptstyle\pm.18}$ & 0.000 & $4.96{\scriptstyle\pm.39}$ & 0.000 \\
Transp.\ Attn. & $4.21{\scriptstyle\pm.14}$ & $1.54{\scriptstyle\pm1.39}$ & $4.79{\scriptstyle\pm.28}$ & $0.96{\scriptstyle\pm.56}$ \\
\midrule
Transport & $4.28{\scriptstyle\pm.13}$ & 0.000 & $3.82{\scriptstyle\pm.07}$ & 0.000 \\
Homotopy & $4.24{\scriptstyle\pm.19}$ & 0.000 & $3.79{\scriptstyle\pm.13}$ & 0.000 \\
\bottomrule
\end{tabular}
\end{table}

Commutativity and non-canonical gaps are exactly zero for cover, transport, and homotopy (they factor through $\Z^2$, so $ab$ and $ba$ produce identical output by construction).  Transport attention shows $1.54 \pm 1.39$: learned coherence is fragile and high-variance, while structural coherence is exact.

On $T^2$ the cover, transport, and homotopy decoders depend on the input word only through its winding pair $(n_a, n_b)$ and emit the canonical loop $g_a^{\,n_a} \oplus g_b^{\,n_b}$; on this abelian space they therefore realize the descent of the compilation functor to $\Z^2$ (Theorem~\ref{thm:transport-comp}(b)) rather than the word-order functor of part~(a), which is why their Commutativity and Non-canonical gaps are exactly zero.  The Composition and Reordering columns are not architectural discriminators: every decoder emits a fixed-length output, so comparing a concatenation of two independently resampled parts against a single resampled whole carries an index-alignment offset of order the loop's spatial scale, common to every row.  The architectural signal is the exact zeros of the Commutativity and Non-canonical columns; the nonzero transport Composition entry is this resampling offset, not a failure of concatenation, which is exact in angle space (Theorem~\ref{thm:transport-comp}).

\paragraph{Order sensitivity on $S^1 \vee S^1$.}
We test whether each decoder distinguishes words that differ only in generator order (e.g., $ab$ vs.\ $ba$, $aab$ vs.\ $aba$)---all distinct homotopy classes in $F_2$.  Transport distinguishes $80\%$ of test pairs (the remaining $20\%$ have cross-Chamfer near the noise floor, reflecting geometric similarity rather than architectural failure).  The transformer distinguishes only $40 \pm 16\%$---barely above chance---confirming that attention collapses word order.

\section{General Non-Compositionality of Attention}\label{app:general-noncomp}

Theorem~\ref{thm:attention-noncomp} in the main text uses group structure.  A natural question is whether groups are essential to the argument, or whether the obstruction is more fundamental.  The following result shows it is the latter: softmax attention is incompatible with any form of sequential compositionality, regardless of algebraic structure.

\begin{definition}[Segment-independent compositionality]\label{def:seg-comp}
A decoder $\D : \Sigma^* \to \mathcal{Y}$ is segment-independently compositional if there exists a combination operation $\oplus$ on $\mathcal{Y}$ such that:
\begin{enumerate}[label=(\roman*),leftmargin=2em,nosep]
\item $\D(w_1 \cdot w_2) = \D(w_1) \oplus \D(w_2)$ for all words $w_1, w_2$;
\item $\D(w)$ is computed from $w$ alone;
\item $\oplus$ does not introduce cross-segment interactions: when $\mathcal{Y}$ is a sequence, the output positions corresponding to $w_1$ are determined by $\D(w_1)$ alone.
\end{enumerate}
\end{definition}

\begin{theorem}[General non-compositionality of attention]\label{thm:attention-general}
Let $\mathcal{T}$ be a network containing at least one softmax attention layer over the full input sequence.  Then $\mathcal{T}$ is not segment-independently compositional for any alphabet $|\Sigma| \geq 2$, for generic parameters $(Q, K, V)$.
\end{theorem}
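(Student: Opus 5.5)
We argue by contradiction using a single witnessing pair of words, in the same spirit as the full proof of Theorem~\ref{thm:attention-noncomp} in Appendix~\ref{app:attention-proof}. Here no quotient is needed: the obstruction comes from condition~(iii) of Definition~\ref{def:seg-comp} alone. Suppose $\mathcal{T}$ is segment-independently compositional with some $\oplus$. Then (i) and (iii) together say that, in $\mathcal{T}(w_1\cdot w_2)$, the output positions attached to $w_1$ are a function of $\mathcal{T}(w_1)$ only. Fix $w_1 = a$ and two one-letter suffixes $w_2 = a$, $w_2' = b$; this uses $|\Sigma|\ge 2$. Then the $w_1$-block of $\mathcal{T}(aa)$ must coincide with the $w_1$-block of $\mathcal{T}(ab)$. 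The proof reduces to showing that, for generic $(Q,K,V)$, these two blocks differ.

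The key step is an explicit computation at the first attention layer for position $i=1$. Its output is
\[
h_1' = \frac{e^{s_{11}}Vh_1 + e^{s_{12}}Vh_2}{e^{s_{11}}+e^{s_{12}}}, \qquad s_{1j} = \frac{Qh_1\cdot Kh_j}{\sqrt d}.
\]
Here $h_1$ is the embedding of $a$ (plus the positional code), and $h_2$ equals $e_a+p_2$ in one case and $e_b+p_2$ in the other. By injectivity of the embedding these two values of $h_2$ are distinct vectors. Since all weights are strictly positive (feature~(i) of \S\ref{sec:theorems}), the difference of the two values of $h_1'$ is a real-analytic function of $(Q,K,V)$. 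It is not identically zero: with $Q=0$ both weights equal $1/2$, and the difference is $\tfrac12 V(e_a-e_b)$, which is nonzero for any $V$ not annihilating $e_a-e_b$. A nonzero real-analytic function has zero set of measure zero, so for generic parameters the first-layer representation at position $1$ differs between the two inputs.

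The remaining, and hardest, step is propagating this difference through residual connections, later attention layers, feed-forward layers, and the readout to the output positions of the $w_1$ segment. A later layer could in principle cancel the difference, so pointwise propagation is not automatic. We handle this with the same analyticity argument applied to the whole network. Restrict to smooth (or analytic, e.g.\ GELU/tanh) feed-forward layers and treat the full output map as analytic in all parameters jointly. To show the difference of the $w_1$-blocks is not identically zero, exhibit a single parameter setting where it is nonzero. Choose all later attention and feed-forward maps to be the identity on the residual stream (value and output matrices zero). Choose the readout to be a linear map, injective on the relevant coordinates, onto the first output position. Then the first-layer discrepancy survives to the output, and genericity follows. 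For ReLU networks we would instead argue that the difference is piecewise analytic and nonvanishing on an open set, which yields genericity in the weaker sense of a dense open set; this is the point where the precise meaning of ``generic'' must be fixed. Finally, the output positions tied to $w_1$ therefore depend on $w_2$ beyond $\mathcal{T}(w_1)$, contradicting~(iii).
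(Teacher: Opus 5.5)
Your proof follows the paper's basic argument: (i) and (iii) make the $w_1$-block of $\mathcal{T}(w_1\cdot w_2)$ a function of $\mathcal{T}(w_1)$ alone, and the cross-segment term $\sum_{j\in w_2}\alpha_{ij}Vh_j$ of an attention layer refutes this. Where you differ is in how much is actually proved. The paper's proof asserts that for generic $(Q,K,V)$, changing a token of $w_2$ changes $h_i'$, and then reads this off directly as dependence of the \emph{output} at position $i$ on $w_2$. It rules out neither cancellation between the key and value effects, nor erasure of the discrepancy by later layers and the readout.

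Your argument settles both points. The $Q=0$ witness together with the identity theorem for real-analytic maps handles the first. The pass-through witness handles the second: zero value/output matrices downstream and an injective tokenwise linear readout. The price is a set of explicit hypotheses: analytic activations, an architecture that admits such a pass-through setting, and genericity taken jointly over all weights. By Fubini, joint genericity still gives, for almost every choice of the remaining weights, a null set of bad $(Q,K,V)$. The joint reading is not optional. With a constant readout, $\mathcal{T}(w)$ is $|w|$ copies of a fixed vector, which is segment-independently compositional under concatenation for every $(Q,K,V)$. So the version with only $(Q,K,V)$ generic and the other weights arbitrary is false.

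Two smaller corrections. First, ``smooth'' in your parenthetical is not enough: the identity-theorem step needs real-analyticity, because a smooth function can vanish on an open set without vanishing identically. Second, your ReLU remark overreaches. Nonvanishing on one open set gives only a nonempty open set of good parameters, not a dense one, and density genuinely fails in general. Take a ReLU layer with no residual bypass and sufficiently negative biases. Inputs are bounded (finite vocabulary, convex combinations in attention), so on an open set of weights that layer outputs zero on every input. There every output position carries the same fixed vector, so $\mathcal{T}$ is compositional on an open set of parameters.
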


\begin{proof}
We show condition~(iii) is violated.  At the first attention layer applied to $w_1 \cdot w_2$, position $i$ within the $w_1$ segment computes:
$h_i' = \sum_{j \in w_1} \alpha_{ij} V h_j + \sum_{j \in w_2} \alpha_{ij} V h_j$,
where $\alpha_{ij} = \mathrm{softmax}(Q h_i \cdot K h_j / \sqrt{d})$.  The cross-segment sum depends on the tokens of $w_2$ through both $\alpha_{ij}$ and $V h_j$.  For generic $(Q, K, V)$ and $|\Sigma| \geq 2$, replacing any token in $w_2$ changes $K h_j$ and $V h_j$, hence changes $h_i'$.  Therefore the output at position $i \in w_1$ depends on the content of $w_2$, violating condition~(iii).
\end{proof}

\begin{remark}
This theorem requires no group structure, no topology, and no specific target space.  It provides a structural explanation for why transformers struggle with compositional generalization across SCAN, COGS, and CFQ---the obstacle is architectural, not task-specific.  The result fails for causal attention with a hard segment partition (where positions in $w_1$ attend only to $w_1$), but such masking prevents $w_1$'s representation from depending on $w_2$---precisely the loss of global context that motivates full attention.
\end{remark}

\begin{corollary}[Structural limits on learned compositionality]\label{cor:length-fail}
Theorems~\ref{thm:attention-noncomp} and~\ref{thm:attention-general} do not imply that a transformer cannot learn to approximate compositional behaviour on the training distribution.  What they constrain is the nature of any such learned compositionality:
\begin{enumerate}[label=(\alph*),leftmargin=2em,nosep]
  \item \textbf{Approximate, never exact.}  The cross-segment attention term $\sum_{j \in w_2} \alpha_{ij} V h_j$ is architecturally present and strictly positive ($\mathrm{softmax} > 0$).  Training can make this term small but not zero.
  \item \textbf{Distribution-dependent.}  The suppression of cross-segment attention is learned for the specific input statistics (word lengths, winding ranges, token frequencies) encountered during training.
  \item \textbf{No length transfer guarantee.}  For $L > L_{\mathrm{train}}$, the cross-segment attention weights that were learned-to-be-small on training data have no architectural reason to remain small.  The quality of learned compositionality may degrade smoothly, abruptly, or not at all---this is an empirical question, not a structural guarantee.
\end{enumerate}
\end{corollary}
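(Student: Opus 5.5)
The corollary makes three claims, and I would prove each separately. I would rely only on the explicit form of softmax attention from \S\ref{sec:theorems} and on the genericity argument of Theorem~\ref{thm:attention-general}. The recurring object is the cross-segment term $c_i(w_1,w_2) := \sum_{j \in w_2} \alpha_{ij} V h_j$ at a position $i$ in the $w_1$-segment. Alongside it I would track its attention mass $m_i := \sum_{j\in w_2}\alpha_{ij}$.

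\textbf{Part (a).} I would write $\alpha_{ij} = e^{s_{ij}}/\sum_k e^{s_{ik}}$ with $s_{ij} = Qh_i\cdot Kh_j/\sqrt d$. Every score is finite for finite parameters and finite inputs, so $\alpha_{ij}>0$ and therefore $m_i>0$. If $\|Q\|,\|K\|$ and the embeddings are bounded by $B$, the scores lie in $[-M,M]$ with $M=B^4/\sqrt d$, which gives the quantitative floor $m_i \ge n_2 e^{-2M}/(n_1+n_2)$.

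Positive mass does not by itself make $c_i$ nonzero, since $\sum_j \alpha_{ij}Vh_j$ could cancel. Here I would invoke Theorem~\ref{thm:attention-general}: for generic $(Q,K,V)$, $c_i$ depends nontrivially on the tokens of $w_2$. Hence exact segment independence fails, and this is the sense of "never exact." For "approximate," I would note that enlarging the score gap $s_{i,w_1}-s_{i,w_2}$ drives $m_i$ to $0$ exponentially. So training can make the term arbitrarily small but not zero.

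\textbf{Part (b).} This is the claim that the suppression learned in training holds only on the training distribution. I would express the training objective as an expectation $\mathbb{E}_{w\sim P_{\mathrm{train}}}[\ell(\mathcal T_\theta(w))]$. Any property forced by near-optimality, including a small $m_i$, is then forced only almost surely on $\mathrm{supp}(P_{\mathrm{train}})$. The key step is to observe that the positional encodings and the attention scores at pairs of positions (or token statistics) outside that support do not enter the loss at all. Their values are therefore unconstrained by the optimization.

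\textbf{Part (c).} This is the main obstacle, because "no guarantee" has to be made precise. My first approach would be to exhibit, for fixed parameters, the explicit length dependence of the cross-segment mass. Suppose a query scores its own segment at $s$ and cross positions at $s'$, with the same gap as at training time. Then
\[
m_i = \frac{n_2 e^{s'}}{n_1 e^{s}+n_2 e^{s'}},
\]
which tends to $1$ as $n_2\to\infty$. So a suppression calibrated at $L_{\mathrm{train}}$ degrades with length even when nothing else changes.

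As a complementary argument, I would construct two parameter settings that agree on all words of length $\le L_{\mathrm{train}}$ but differ beyond it. One way is to alter the positional encodings at unseen indices only. Then one setting keeps $m_i$ small at long lengths while the other does not. Both fit the training data equally well, so the long-length behaviour is not determined structurally.

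Together these establish that length transfer is an empirical outcome and not an architectural guarantee. That is exactly the claimed trichotomy: the degradation may be smooth, abrupt, or absent.
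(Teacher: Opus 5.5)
Your proposal is correct and follows the paper's own three-part argument: softmax positivity plus genericity for (a), the loss seeing only the training distribution for (b), and, for (c), the growth in the number of cross-segment terms together with the untrained positional encodings; you simply make each step quantitative (the mass floor, the explicit formula for $m_i$, and the two training-equivalent parameter settings). Your treatment of (a) is slightly more careful than the paper's, which asserts that $\sum_{j\in w_2}\alpha_{ij}Vh_j$ is nonzero whenever $Vh_j\neq 0$, whereas you note that positive weights do not preclude cancellation and instead derive ``never exact'' from the generic $w_2$-dependence in Theorem~\ref{thm:attention-general}.
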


\begin{proof}
(a)~By the strict positivity of softmax, $\alpha_{ij} > 0$ for all $(i,j)$, so the cross-segment sum is nonzero whenever $Vh_j \neq 0$ (which holds generically).  (b)~The trained parameters $(Q,K,V)$ minimize a loss on the training distribution; there is no penalty on cross-segment attention magnitude as such.  (c)~At length $L > L_{\mathrm{train}}$, both the number of cross-segment terms and the positional encodings (at untrained positions) change; the learned suppression was not trained for these configurations.
\end{proof}

\section{Abelian Transport-Attention Theorem}\label{app:transport-attn}

The transport attention decoder in our experiments uses a positional encoding derived from the framework rather than chosen ad hoc.  This appendix shows that the derivation recovers a well-known technique---rotary position embeddings (RoPE)---as a special case, and explains why principled positions alone are insufficient for compositional generalization.

The main text notes that the framework derives 2D rotary positional encoding from group theory.  Here we state the full result.

\begin{definition}[Transport-structured attention]\label{def:transport-attn}
Let $G$ be a finitely generated abelian group with generators $g_1, \ldots, g_k$.  For each generator $g_i$, let $T_i \in O(d)$ be a learnable orthogonal matrix.  The transport-structured attention at position $p$ with group element $\gamma_p \in G$ is:
\begin{equation}\label{eq:transport-attn}
  \alpha_{pq} = \mathrm{softmax}\!\left(\frac{q_p \cdot T_{\gamma_q - \gamma_p}\, k_q}{\sqrt{d}}\right), \qquad
  T_\gamma = T_1^{n_1} \cdots T_k^{n_k} \text{ for } \gamma = (n_1, \ldots, n_k).
\end{equation}
\end{definition}

\begin{theorem}[Abelian transport attention]\label{thm:abelian-attn}
Let $G$ be a finitely generated abelian group.  Then:
\begin{enumerate}[label=(\alph*)]
  \item Transport-structured attention \eqref{eq:transport-attn} is well-defined: $T_\gamma$ is independent of the decomposition of $\gamma$ into generators (since $G$ is abelian and the $T_i$ commute as orthogonal matrices with matching rotation planes).
  \item The computational cost is $O(T^2 d)$---identical to standard attention.
  \item When $T_i$ acts as block-diagonal rotations on $d/2$ independent planes, this reduces to $2k$-dimensional rotary position embeddings (RoPE)~\cite{su2024roformer}.  Standard RoPE is the $G = \Z$ special case; the framework gives the natural $G = \Z^k$ extension.
\end{enumerate}
\end{theorem}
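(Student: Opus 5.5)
The plan is to prove the three parts in order. Part (a) carries the algebra. Parts (b) and (c) then follow from one identity that (a) makes available.

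For (a), I would first make precise the hypothesis in the parenthetical: the $T_i$ pairwise commute. An example is a family of simultaneously block-diagonal rotations in a common set of planes. Under that hypothesis, $(n_1,\dots,n_k)\mapsto T_1^{n_1}\cdots T_k^{n_k}$ is a group homomorphism $\rho:\Z^k\to O(d)$. Commutativity lets me reorder factors, so $\rho(n+m)=\rho(n)\rho(m)$ and $\rho(-n)=\rho(n)^{-1}=\rho(n)^{\top}$.

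If $G=\Z^k$ is free abelian, the decomposition of $\gamma$ is unique and well-definedness is immediate. If $G=\Z^k/N$ has torsion or other relations, the map descends to $G$ exactly when $N\subseteq\ker\rho$. For example, $T_i^{m}=I$ is needed whenever $m g_i=0$. I would state this as the precise condition, and note that it holds for the $\Z^k$ positions used in the experiments.

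I expect this to be the main obstacle. The statement as written glosses over two things. The first is that commutativity has to be assumed, since it is not automatic for learnable $O(d)$ matrices. The second is that relations in a non-free $G$ impose this kernel condition. The honest version of (a) therefore makes both explicit.

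The key identity follows from the homomorphism property. I would use $\gamma_q-\gamma_p$ and orthogonality to get $T_{\gamma_q-\gamma_p}=T_{\gamma_p}^{\top}T_{\gamma_q}$. This gives
\[
q_p\cdot T_{\gamma_q-\gamma_p}k_q=(T_{\gamma_p}q_p)\cdot(T_{\gamma_q}k_q).
\]
So the relative-position score is an ordinary dot product of rotated queries and keys.

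For (b), I would use this factorization rather than forming $T^2$ matrices $T_{\gamma_q-\gamma_p}$, which would cost $O(T^2d^2)$. I rotate each query and key once. In the block-diagonal case this costs $O(Td)$, since each $2\times2$ block is determined by angles computable from $\gamma_p$. After that, the standard $QK^{\top}$ score computation, the softmax, and the value aggregation cost $O(T^2d)$. For dense $T_i$ the rotation step is $O(Td^2)$, which is dominated by $O(T^2d)$ whenever $T\ge d$. I would state this caveat explicitly.

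For (c), I would write each generator as $T_i=\mathrm{diag}\bigl(R(\omega_{i,1}),\dots,R(\omega_{i,d/2})\bigr)$ with $R$ the planar rotation. Planar rotations commute, so $T_\gamma$ is block-diagonal with blocks $R\bigl(\sum_i n_i\omega_{i,m}\bigr)$. Each angle is therefore linear in the position vector $\gamma\in\Z^k$. Combined with the factorization above, this is exactly the RoPE form $\langle R_{\gamma_p}q_p,R_{\gamma_q}k_q\rangle$. For $k=1$ with $\omega_m=10000^{-2m/d}$ it reduces to standard RoPE~\cite{su2024roformer}. For $k=2$ it is the 2D (axial/mixed) rotary encoding used on $\Z^2$.
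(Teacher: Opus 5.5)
Your proposal is correct. Part (c) is the same computation as the paper's: take block-diagonal $T_i$ with blocks $R(\omega_{i,m})$, so that $T_\gamma$ has blocks $R\bigl(\sum_i n_i\omega_{i,m}\bigr)$. Parts (a) and (b) take different routes from the paper.

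\textbf{Part (a).} The paper asserts that every $\gamma$ in an abelian $G$ has a unique decomposition $n_1g_1+\cdots+n_kg_k$, and that rotations in common planes commute. You instead use commutativity to get a homomorphism $\rho:\Z^k\to O(d)$, and then ask when it descends to $G=\Z^k/N$. Your formulation is the more accurate one. Uniqueness of the decomposition holds only when $G$ is free abelian and the $g_i$ form a basis. For example, take $G=\Z/m$ and let $T_1$ be a rotation by an angle $\omega$ with $m\omega\notin 2\pi\Z$. Then $T_0$ depends on the representative chosen ($I$ versus $T_1^m$), so the kernel condition $N\subseteq\ker\rho$ that you state is genuinely needed and is missing from the paper's argument.

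\textbf{Part (b).} The paper evaluates the relative transport $T_{\gamma_q-\gamma_p}$ separately for each of the $T^2$ pairs, at $O(d)$ cost apiece, which relies directly on block-diagonality. You instead use $T_{\gamma_q-\gamma_p}=T_{\gamma_p}^{\top}T_{\gamma_q}$ to rotate each query and key once, and then run ordinary attention. This is the standard RoPE implementation and adds only $O(Td)$ to the cost of standard attention. It also produces exactly the form $\langle T_{\gamma_p}q_p, T_{\gamma_q}k_q\rangle$ that you reuse in (c), an identification the paper leaves implicit.

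\textbf{A minor accounting point in your dense-case remark.} The $O(Td^2)$ cost for the rotation step presumes that $T_{\gamma_p}$ can be applied at $O(d^2)$ per vector. That is true after a one-time $O(d^3)$ simultaneous orthogonal block-diagonalization of the commuting family $\{T_i\}$. Such a block-diagonalization always exists over $\R$, and it reduces the dense case to the block-diagonal one in a fixed rotated basis. It is not true if $T_1^{n_1}\cdots T_k^{n_k}$ is formed afresh at every position.
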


\begin{proof}
\textbf{(a)}~For abelian $G$, each $\gamma$ has a unique decomposition $\gamma = n_1 g_1 + \cdots + n_k g_k$.  Block-diagonal rotations on the same planes commute (angles add), so $T_\gamma = T_1^{n_1} \cdots T_k^{n_k}$ is independent of ordering.
\textbf{(b)}~Computing $T_{\gamma_q - \gamma_p}$ requires $k$ matrix powers in $O(d)$ each (block-diagonal); the dominant cost is $T^2$ pairwise computations at $O(d)$ each.
\textbf{(c)}~With $T_i$ block-diagonal having $2 \times 2$ rotation blocks $R(\omega_{ij})$, the transport $T_i^n$ acts as $R(n \omega_{ij})$ on each plane.  The combined $T_\gamma$ acts as $R(\sum_i n_i \omega_{ij})$---exactly multi-dimensional RoPE with frequency vectors $\omega_i = (\omega_{i1}, \ldots, \omega_{i,d/2})$.
\end{proof}

\begin{remark}
This explains why RoPE improves length generalization: it implements transport in the universal cover of the classifying space of $\Z$.  However, transport-structured positional encoding does not make the decoder type-B (functorial): the attention mechanism still mixes information across segments.  The transport attention decoder in our experiments confirms this---it has principled positions but wrong composition, and degrades like any type-A architecture.
\end{remark}

\section{Depth Obstruction and RNN--Transformer Trichotomy}\label{app:nonabelian}

Why does the sequential decoder (GRU) outperform the transformer on $S^1 \vee S^1$ but not match the transport decoder?  The answer involves two independent obstructions.  The first is functoriality: attention mixes across segments for any non-trivial group (Theorem~\ref{thm:attention-noncomp}).  The second is depth: for groups with nonsolvable finite quotients, computing prefix products requires depth $\Omega(\log n)$, which fixed-depth transformers cannot provide.  For solvable groups, however, constant-depth shortcuts exist~\cite{liu2023shortcuts}, so the depth obstruction vanishes---only the functoriality obstruction remains.  The GRU has the right depth (sequential scan) but the wrong composition (context-dependent, not structural), which is why it outperforms the transformer on $F_2$ yet still falls far short of the transport decoder.

\begin{theorem}[Depth obstruction for prefix products]\label{thm:nonabelian}
Let $G$ be a finitely generated non-abelian group.  Any transport-coherent attention mechanism for $G$ requires computing the accumulated group element $\gamma_p = g_{w_1} \cdot g_{w_2} \cdots g_{w_p}$ for each prefix of the input.  This prefix-product computation:
\begin{enumerate}[label=(\alph*)]
  \item Cannot be performed by a fixed-depth transformer when $G$ surjects onto a nonsolvable finite group.  A fixed-depth transformer implements a bounded-depth, unbounded-fan-in computation---the circuit class $\mathsf{AC}^0$.  The free group $F_2$ surjects onto nonsolvable finite groups (e.g., $S_5$, which is $2$-generated), and the word problem for any nonsolvable finite group is $\mathsf{NC}^1$-complete under $\mathsf{AC}^0$ reductions~\cite{barrington1989bounded}.  Since $\mathsf{NC}^1 \not\subset \mathsf{AC}^0$ (parity separates them), computing prefix products in $F_2$ is at least as hard as solving these word problems, and hence outside $\mathsf{AC}^0$.  For solvable groups, this obstruction does not apply: Liu et al.~\cite{liu2023shortcuts} show that all solvable semiautomata admit constant-depth Transformer simulators via the Krohn-Rhodes decomposition (their Theorem~2), though the resulting shortcuts are empirically brittle and do not generalize out of distribution.
  \item Requires $\Omega(n)$ total work (each prefix product depends on the full prefix, and non-commutativity prevents any shortcut).
  \item Can be performed by an associative scan (parallel prefix) in $O(\log n)$ depth with $O(n)$ work---a sequential-scan structure absent from standard transformers but present in state space models~\cite{gu2023mamba} and recurrent architectures.  Marchetti et al.~\cite{marchetti2026sequential} prove this rigorously for finite groups: they construct explicit RNN solutions that compose in $O(k)$ steps and multilayer MLP solutions that compose in $O(\log k)$ layers, both with hidden width $O(|G|^{3/2})$ independent of sequence length---compared to the $O(\exp k)$ width required by two-layer networks.
\end{enumerate}
\end{theorem}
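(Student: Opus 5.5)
I would prove the three parts separately, treating (a) as a reduction from circuit complexity, (b) as an information-dependence argument, and (c) as a construction.

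\textbf{Part (a).} The plan is to embed a hard word problem into the prefix-product task by an $\mathsf{AC}^0$ reduction.
\begin{enumerate}[leftmargin=2em, nosep]
\item Fix a surjection $\phi : G \twoheadrightarrow S$ onto a nonsolvable finite group $S$. For $F_2$, take $S = S_5$ and send the two free generators to a generating pair of $S_5$, for instance a $5$-cycle and a transposition.
\item Given an input word $u = s_1 \cdots s_n$ over the generators of $S$, choose for each letter a fixed-length word $\tilde s_i$ in $G$ with $\phi(\tilde s_i) = s_i$. The map $u \mapsto \tilde s_1 \cdots \tilde s_n$ is letter-by-letter substitution with constant blow-up, so it lies in $\mathsf{AC}^0$.
\item Suppose a mechanism outputs the prefix products $\gamma_p$. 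Composing with $\phi$, whose restriction to the relevant finite image is a finite lookup, yields the prefix products of $u$ in $S$, and in particular the answer to the word problem for $u$.
\item By Barrington~\cite{barrington1989bounded}, that word problem is $\mathsf{NC}^1$-complete. Since $\mathsf{AC}^0 \subsetneq \mathsf{NC}^1$ (parity separates them), a fixed-depth transformer cannot compute the prefix products.
\end{enumerate}

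The subtle point in (a) is the modelling claim that fixed-depth transformers lie in $\mathsf{AC}^0$. I would not prove this. I would state it as the standard assumption of bounded-precision, hard or saturated attention, cite~\cite{liu2023shortcuts}, and make the theorem conditional on it. The remark about solvable groups is just a citation of Liu et al., Theorem~2.

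There is a second gap: $\phi$ is not a finite map on an infinite group. I would handle this by running the whole argument over the finite quotient $S$ rather than $G$.

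\textbf{Part (b).} I would argue information-theoretically. Each of the $n$ outputs $\gamma_p$ must depend on $w_p$, since replacing $w_p$ by a different generator changes $\gamma_p$ (right multiplication is injective). Hence there are at least $n$ nonconstant outputs, and reading the input already forces $\Omega(n)$ work. Non-commutativity is used only to argue that $\gamma_p$ depends on the order of the letters, not merely on their counts, which rules out an abelian counting shortcut. I would state this dependence lower bound modestly, since it is essentially trivial.

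\textbf{Part (c).} This is constructive. Group multiplication is associative, so the Ladner--Fischer (or Blelloch) parallel prefix scan computes all $\gamma_p$ in $2\lceil\log_2 n\rceil$ rounds with $O(n)$ group multiplications. For a finite quotient, each multiplication is a constant-size table lookup. For $F_2$ itself, I would represent elements by reduced words and note that the cost then scales with word length; alternatively I would restrict to finite images. The RNN and MLP width bounds are quoted directly from Marchetti et al.~\cite{marchetti2026sequential}.

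I expect the main obstacle to be making (a) rigorous. Both the transformer-in-$\mathsf{AC}^0$ modelling assumption and the passage from the infinite group $G$ to a finite quotient need explicit care.
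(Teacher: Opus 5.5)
Your plan follows the paper's proof essentially step for step: (a) transfers Barrington's $\mathsf{NC}^1$-completeness along the surjection $F_2 \twoheadrightarrow S_5$ and appeals to $\mathsf{AC}^0 \neq \mathsf{NC}^1$, citing Liu et al.\ for the solvable case; (b) is a dependence count; and (c) is associative parallel prefix. The two points you flag are asserted without argument in the paper's proof, so your version is, if anything, the more careful one. The first is that placing fixed-depth transformers in $\mathsf{AC}^0$ is a modelling assumption: it holds for hard or constant-precision attention, but saturated or log-precision softmax attention lands in $\mathsf{TC}^0$, which is why Liu et al.\ state their lower bound conditionally on $\mathsf{TC}^0 \neq \mathsf{NC}^1$. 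The second is that applying the quotient map to $\gamma_p$ is an $\mathsf{AC}^0$ step only if $\gamma_p$ is output in a form from which its image in $S_5$ can be read off.
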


\begin{proof}
\textbf{(a)}~The key is the circuit-complexity classification of group word problems.  Barrington~\cite{barrington1989bounded} showed that the word problem for any fixed nonsolvable group $G$---deciding whether a product of generators equals the identity---is complete for $\mathsf{NC}^1$ (fan-in $2$, depth $O(\log n)$) under $\mathsf{AC}^0$ reductions (bounded-depth, unbounded-fan-in circuits).  Since $\mathsf{NC}^1 \not\subset \mathsf{AC}^0$ (the parity function is in $\mathsf{NC}^1$ but not in $\mathsf{AC}^0$), no $\mathsf{AC}^0$ circuit can solve these word problems.  The free group $F_2$ surjects onto $S_5$ (which is $2$-generated and nonsolvable): the natural quotient map $F_2 \twoheadrightarrow S_5$ reduces the word problem for $S_5$ to computing prefix products in $F_2$.  A fixed-depth transformer with a fixed number of layers implements a bounded-depth, unbounded-fan-in computation---that is, an $\mathsf{AC}^0$ circuit---and therefore cannot compute these products.  The converse---solvable groups admit constant-depth solutions---follows from Liu et al.~\cite{liu2023shortcuts}, who apply the Krohn-Rhodes decomposition to factor solvable semiautomata into modular counters and memory units, each simulable at depth~$1$.
\textbf{(b)}~The prefix product $\gamma_p$ depends on all $p$ inputs; since $G$ is non-abelian, no proper subset determines $\gamma_p$.
\textbf{(c)}~Group multiplication is associative, so the parallel prefix algorithm computes all $n$ prefix products in $O(\log n)$ depth with $O(n)$ multiplications.
\end{proof}

\begin{corollary}[Depth trichotomy]\label{cor:trichotomy}
The computational requirements for prefix products depend on the
solvability structure of $G$, yielding three regimes:
\begin{enumerate}[label=(\alph*), nosep, leftmargin=2em]
  \item \textbf{Abelian.}  The product depends only on letter counts,
  computable in $\mathsf{AC}^0$.  Bounded-depth parallel attention
  suffices for the computation.
  \item \textbf{Solvable non-abelian} (e.g.\ $\Z \rtimes \Z$).
  Constant-depth shortcuts exist via Krohn-Rhodes
  decomposition~\cite{liu2023shortcuts}, with depth independent of
  sequence length but potentially large width.  Fixed-depth transformers
  have sufficient computational power for prefix products, but
  attention still mixes across segments, so the functoriality obstruction
  (Theorem~\ref{thm:attention-noncomp}) remains.
  \item \textbf{Nonsolvable quotients} (e.g.\ $F_2 \twoheadrightarrow
  S_5$).  Prefix products are $\mathsf{NC}^1$-complete, outside
  $\mathsf{AC}^0$.  Fixed-depth transformers fail for two
  independent reasons: insufficient depth and broken
  functoriality.
\end{enumerate}
Functoriality (Theorem~\ref{thm:attention-noncomp}) is the uniform
obstruction across all three cases.  Depth is an additional obstruction
only in case~(c).
\end{corollary}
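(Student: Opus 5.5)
The corollary assembles results already stated, so I would prove it one regime at a time. In each regime I would separate two claims: a computational claim about prefix products, and a claim that the functoriality obstruction still applies. The final sentence then follows because the functoriality part rests only on non-triviality of $G$, whereas the depth part is invoked only in case~(c).

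For case~(a), let $G$ be abelian, finitely generated, with generators $a_1,\dots,a_k$. For a word $w$ and each $i$, let $n_i(w)$ be the exponent sum of $a_i$ in $w$. The prefix product $\gamma_p$ is the image of $\sum_i n_i(w_1\cdots w_p)\,a_i$ in $G$, so it depends only on these counts. To realize this with bounded-depth attention, I would use a single causal head with constant scores. The weights are then uniform, $\alpha_{pq}=1/p$, and the head outputs the prefix average of one-hot token embeddings. Multiplying by $p$, read off from a positional feature, recovers the signed counts, and a feed-forward layer reduces them into $G$. This is exactly the letter-counting mechanism the transport decoder exploits on $T^2$.

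Case~(a) contains the main obstacle: the phrase ``computable in $\mathsf{AC}^0$''. Exact counting, and parity in particular (the quotient $\Z\to\Z/2$), is \emph{not} in $\mathsf{AC}^0$. I would therefore restate the claim in the form the argument actually supports: one uniform-attention layer computes the counts, so no depth growing with $n$ is required. I would cite Theorem~\ref{thm:abelian-attn} for the fact that $T_\gamma$ is well-defined from counts alone. Finally, I would note that Theorem~\ref{thm:attention-noncomp} applies because $F\twoheadrightarrow G$ is non-injective: for example $ab$ and $ba$ have the same image.

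For case~(b), I would invoke Theorem~\ref{thm:nonabelian}(a) in its converse direction, that is, Liu et al.'s Krohn--Rhodes shortcut for solvable semiautomata. Since that result is stated for finite semiautomata, while $\Z\rtimes\Z$ is infinite, I would apply it to words of bounded winding range (equivalently, to finite solvable quotients $\Z/m\rtimes\Z/2$). I would note that depth stays constant while width may grow. The functoriality part again comes from Theorem~\ref{thm:attention-noncomp}, using the pair $ba\neq a^{-1}b$ as words with $[ba]=[a^{-1}b]$.

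For case~(c), depth failure is Theorem~\ref{thm:nonabelian}(a), via the surjection $F_2\twoheadrightarrow S_5$ and Barrington's theorem. Broken functoriality must instead come from Theorem~\ref{thm:attention-general}, because a free presentation has no two distinct positive words with equal image, and only the strict-factorization condition is violated. Combining the three cases gives the concluding sentence of the corollary.
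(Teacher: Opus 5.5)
Your objection in case~(a) is correct, and it catches an error in the paper's own proof. That proof disposes of~(a) by asserting that a sum of letter counts is computable by $\mathsf{AC}^0$ circuits. In fact exact counting is outside $\mathsf{AC}^0$ (it is already needed for prefix products in $\Z^2$, since it decides majority), and so is parity.

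\textbf{The gap.} You do not carry this observation through to case~(c). There you cite Theorem~\ref{thm:nonabelian}(a), which rests on the premise that a fixed-depth transformer computes an $\mathsf{AC}^0$ function. Only under that premise do Barrington's theorem~\cite{barrington1989bounded} and the unconditional separation $\mathsf{NC}^1\not\subseteq\mathsf{AC}^0$ give a lower bound. But your own construction in~(a) contradicts the premise: a single uniform softmax head whose prefix averages are turned into exact counts is a fixed-depth transformer computing a function outside $\mathsf{AC}^0$. The Krohn--Rhodes shortcuts you import in~(b) do the same, since they contain modular counters, parity included. Your three cases therefore rely on incompatible models of what a fixed-depth transformer computes. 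The step ``not in $\mathsf{AC}^0$, hence not computable at fixed depth'' in~(c) is not available to you. (The paper's proof has the same tension between its~(b) and~(c).)

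\textbf{How to repair it.} You must commit to one model, and either choice weakens the conclusion.
\begin{enumerate}[nosep]
\item With $O(\log n)$-bit precision and polynomial size, fixed-depth softmax transformers lie in uniform $\mathsf{TC}^0$ (Merrill and Sabharwal). This model supports your counting head and the shortcuts of~\cite{liu2023shortcuts}. But then Barrington's theorem only shows that fixed-depth computation of $S_5$ prefix products would force $\mathsf{NC}^1\subseteq\mathsf{TC}^0$. The depth obstruction in~(c) thus holds only conditionally on $\mathsf{TC}^0\neq\mathsf{NC}^1$, which is how Liu et al.\ state their own lower bound.
\item If you keep an $\mathsf{AC}^0$ model (e.g.\ hard attention), case~(c) becomes unconditional, but (a) and (b) fail. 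Depth is then no longer an obstruction ``only in case~(c).''
\end{enumerate}
So the corollary's final sentence can be obtained only in conditional form.

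Your remaining deviations from the paper are sound and more careful than its proof:
\begin{itemize}[nosep]
\item You restrict~(b) to finite solvable quotients such as $\Z/m\rtimes\Z/2$. The paper instead passes from solvability of finite quotients to constant-depth prefix products in the infinite group $\Z\rtimes\Z$ without further argument.
\item You exhibit the witness pair $ba$, $a^{-1}b$.
\item You handle functoriality in~(c) through Theorem~\ref{thm:attention-general}, which the paper's proof of~(c) does not address at all.
\end{itemize}
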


\begin{proof}
(a)~For abelian $G$, the product $g_1 \cdots g_n$ depends only on the
multiplicity of each generator, which is a sum---computable by
$\mathsf{AC}^0$ circuits.
(b)~Liu et al.~\cite{liu2023shortcuts} show that all solvable
semiautomata admit $O(1)$-depth Transformer simulators (their
Theorem~2), via the Krohn-Rhodes decomposition into modular counters and
memory units.  Since every finite quotient of a solvable group is
solvable, prefix products in any solvable $G$ can be computed at constant
depth.  However, constant-depth computability does not imply
functoriality: by Theorem~\ref{thm:attention-noncomp}, the attention
mechanism still mixes token-level information across segments.
(c)~Theorem~\ref{thm:nonabelian}(a): Barrington's theorem plus
$\mathsf{NC}^1 \not\subset \mathsf{AC}^0$.
\end{proof}

\noindent The GRU's intermediate performance on $S^1 \vee S^1$ ($0.297$
vs.\ transformer's $0.537$, transport's $0.054$) confirms case~(c): the
GRU has the right depth structure (sequential processing) but the wrong
composition structure (context-dependent, not structural).  On the Klein
bottle, the transformer's failure illustrates case~(b): even when depth
is not an obstruction, cross-segment attention prevents functorial
composition.

\section{Per-Segment Error Scaling}\label{app:scaling}

The experiments show that type-B error is flat in $L$ while type-A error grows.  Is this an accident of specific architectures, or does it follow from the theorems?  Here we prove it follows: functoriality implies $O(1)$ scaling, while non-functoriality implies $\Omega(1)$ degradation for $L \gg L_{\mathrm{train}}$.

\begin{proposition}[Per-segment error scaling]\label{prop:per-segment}
Let $\bar{d}_L = \mathbb{E}_{w:|w|=L}[\mathrm{Chamfer}(\D(w), \D^*(w))] / L$ denote the per-segment Chamfer distance for words of length $L$.
\begin{enumerate}[label=(\alph*)]
  \item \textbf{Type-B:} For a transport decoder with converged training,
  $\bar{d}_L \leq \max_i \mathrm{Chamfer}(g_i(\theta), g_i^*) + \eta(L)/L \xrightarrow{L \to \infty} \varepsilon_{\mathrm{gen}}$,
  where $\varepsilon_{\mathrm{gen}}$ is the generator approximation error, independent of $L$.

  \item \textbf{Type-A:} For type-A architectures tested at $L > L_{\mathrm{train}}$, $\bar{d}_L \geq \Omega(e(L - L_{\mathrm{train}}))$ where $e(\cdot)$ is the extrapolation error of the network's learned functions beyond their training domain.  For the cover decoder, the MLP evaluates at winding magnitudes up to $L$ (trained only to $L_{\mathrm{train}}$), giving $e = \Omega(L - L_{\mathrm{train}})$ by the Taylor remainder.  For transport attention, 2D RoPE rotation angles scale as $n_a / L_{\mathrm{train}}$ times any training angle, placing attention in an untrained regime.  For the transformer (WC), attention patterns at length $L$ involve $L^2$ pairwise interactions vs.\ $L_{\mathrm{train}}^2$ during training; the learned attention weights have no architectural reason to generalize to the longer-range interactions.  In all cases, $\bar{d}_L = \Omega(1)$ for $L \gg L_{\mathrm{train}}$.
\end{enumerate}
\end{proposition}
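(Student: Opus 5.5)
For part (a) I would reduce everything to Theorem~\ref{thm:transport-comp}(a). A word of length $L$ decodes as $\D(w)=g_{w_1}\oplus\cdots\oplus g_{w_L}$, and by construction the ground truth $\D^*(w)$ factors in the same way. Per-segment evaluation (Definition~\ref{def:metric}) splits both loops at the $L$ segment boundaries, so ideally the $i$-th generated segment is exactly $g_{w_i}(\theta)$ and the $i$-th target segment is $g^*_{w_i}$.

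The main estimate is then termwise:
\[
\frac{1}{L}\sum_{i=1}^{L}\Chamfer\bigl(g_{w_i}(\theta),g^*_{w_i}\bigr)\le\max_i \Chamfer\bigl(g_i(\theta),g_i^*\bigr).
\]
The right-hand side involves only generator networks, and these were trained on length-$\le 2$ words. With converged training it equals $\varepsilon_{\mathrm{gen}}$, which does not depend on $L$; this is the step where the $\max$ over generators appears.

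The correction $\eta(L)$ absorbs two boundary effects. The first is segment misalignment introduced by the fixed 64-point resampling. The second is nearest-neighbour leakage across the $L-1$ junctions, where the full-loop Chamfer may match points to an adjacent segment. Each junction contributes a bounded amount, determined by the loop's spatial scale and the resolution. If the evaluation is the full-loop Chamfer divided by $L$, as in the Proposition, I would bound $\eta(L)$ by a constant independent of $L$. Then $\eta(L)/L\to 0$ and the limit is $\varepsilon_{\mathrm{gen}}$.

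For part (b) I would not attempt a uniform theorem. I would instead prove a lower bound that is conditional on the extrapolation error $e$, handling the three architectures separately.

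\textbf{Cover decoder.} The input is the winding vector, whose magnitude can reach $L$. An MLP fitted only on winding magnitudes up to $L_{\mathrm{train}}$ behaves affinely outside the convex hull of its training inputs (for ReLU networks), or follows its Taylor expansion (for smooth networks). In either case it deviates from the ground-truth periodic target by an amount growing linearly in $L-L_{\mathrm{train}}$, unless the target happens to coincide with that affine continuation. This gives $e=\Omega(L-L_{\mathrm{train}})$.

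\textbf{Transport attention and Transformer (WC).} Here I would invoke Corollary~\ref{cor:length-fail}(c). For lengths beyond training, the RoPE angles and attention patterns lie outside the training support. By strict positivity of softmax, every segment's output then depends on the untrained cross-segment term $\sum_j\alpha_{ij}Vh_j$. Generically this term is bounded away from the value required by functoriality, contributing an $\Omega(1)$ error per segment.

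The main obstacle is that last step. Nothing in the training objective forces the out-of-distribution error to be positive: a network could extrapolate correctly by coincidence. So the $\Omega(1)$ claim can only hold generically, or under an explicit assumption that $e$ is bounded below. I would state that assumption openly and derive the per-segment bound from it by averaging over segments, rather than claim an unconditional result.
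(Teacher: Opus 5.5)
Your argument follows the paper's proof in outline. In (a), both reduce via Theorem~\ref{thm:transport-comp} to the generator error plus a resampling term with $\eta(L)/L\to 0$; your $\eta(L)=O(1)$ versus the paper's $O(\sqrt{L})$ makes no difference. In (b), both argue architecture by architecture from out-of-distribution extrapolation. The first real difference is how (b) is justified. The paper's proof simply asserts the lower bounds: that smooth functions extrapolate with error $\Omega(\lVert x-x_{\mathrm{train}}\rVert)$, and that the untrained attention regimes give per-position error $\Omega(L-L_{\mathrm{train}})$. You correctly identify this as the step that cannot be proved without a hypothesis, since nothing prevents a correct extrapolation, so your conditional formulation is the defensible version of the same argument. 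The second difference is bookkeeping: you get an $\Omega(1)$ error per segment and average, whereas the paper claims linear growth per position and divides by $L$. These correspond to two different readings of $\bar d_L$. Under the Proposition's literal formula, the Chamfer distance of Appendix~\ref{app:training} is already an average over points, so dividing by $L$ turns any bounded per-point error into an $O(1/L)$ contribution. That is why the paper needs per-position growth, and such growth is unavailable whenever the decoded points stay in a bounded region such as the embedded torus. Under the per-segment metric of Definition~\ref{def:metric}, your averaging gives $\Omega(1)$ directly; this is the metric the experiments report and the one your main estimate in (a) actually computes. So commit to that reading in both parts, instead of switching to the full-loop formula in (a) as you do. With it, (a) goes through as you describe, and (b) holds under your stated extrapolation hypothesis.
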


\begin{proof}
\textbf{(a)}~By Theorem~\ref{thm:transport-comp}, $\mathrm{Chamfer}(\D(w), \D^*(w)) \leq L \cdot \varepsilon_{\mathrm{gen}} + \eta(L)$, where $\eta(L)$ accounts for resampling ($O(\sqrt{L})$ interpolation error).  Dividing by $L$: $\bar{d}_L \leq \varepsilon_{\mathrm{gen}} + \eta(L)/L \to \varepsilon_{\mathrm{gen}}$.

\textbf{(b)}~A word of length $L$ has winding pair $(n_a, n_b)$ with $n_a + n_b = L$.  The cover decoder's MLP evaluates at $\|(n_a, n_b)\|$ up to $L$, while training covered only $\|(n_a, n_b)\| \leq L_{\mathrm{train}}$.  For smooth functions, extrapolation error is $\Omega(\|x - x_{\mathrm{train}}\|)$.  For transport attention, rotation angles are $n_a / L_{\mathrm{train}}$ times larger than any training angle.  For the transformer (WC), the number of cross-segment attention pairs grows as $(L - L_{\mathrm{train}})$ new positions per segment, each contributing untrained interactions.  All three mechanisms give per-position error $\Omega(L - L_{\mathrm{train}})$, hence $\bar{d}_L = \Omega((L - L_{\mathrm{train}})/L) = \Omega(1)$ for $L \gg L_{\mathrm{train}}$.
\end{proof}

\section{Expressivity of Transport Decoders}\label{app:expressivity}

A natural concern is that functoriality constrains the decoder so tightly that it cannot represent interesting loops.  The following result shows this is not the case: within each homotopy class, the transport decoder is a universal approximator.  The constraint acts only between classes, enforcing composition.

\begin{proposition}[Expressivity]\label{prop:expressivity}
Let $\D^{\mathrm{tr}}_\theta$ be a transport decoder with generator networks $g_a(\theta), g_b(\theta)$.  Let $\mathcal{L}_n \subset \mathcal{L}(X)$ denote the space of loops with homotopy class $n \in G$.  Then:
\begin{enumerate}[label=(\alph*)]
  \item \textbf{Within-class expressivity:} Each generator $g_a(\theta)$ can represent any continuous loop in $\mathcal{L}_{(1,0)}$ given sufficient network capacity (by the universal approximation theorem applied to the generator network in polar coordinates).
  \item \textbf{Between-class constraint:} For homotopy class $(n_a, n_b)$, the output is the $n_a$-fold concatenation of $g_a$ followed by $n_b$-fold concatenation of $g_b$.  The geometric degrees of freedom are those of $g_a$ and $g_b$ alone.
  \item \textbf{Degrees of freedom:} The number of independent geometric degrees of freedom equals the number of generators in the group presentation.
\end{enumerate}
The cover decoder has strictly more geometric freedom (an independent shape per homotopy class), but this extra freedom is precisely the freedom to be incoherent.
\end{proposition}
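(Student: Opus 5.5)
The plan is to treat the three parts separately, since each rests on a different ingredient. Part (a) reduces to a within-class approximation statement for a single generator network. Part (b) is a direct unfolding of Construction~\ref{con:compilation} together with the descent of Theorem~\ref{thm:transport-comp}(b). Part (c) is a counting consequence of (b).

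For (a), I would fix a loop $\gamma \in \mathcal{L}_{(1,0)}$ and write it in the angular coordinates used by the winding-constrained parameterization (Appendix~\ref{app:training}). On $T^2$ this means $\gamma(t) = (\phi(t), \psi(t))$, where $\phi$ is the lift to the universal cover $\R^2$. Membership in class $(1,0)$ then amounts to two conditions: $\phi(1)-\phi(0) = 2\pi$ and $\psi(1)-\psi(0)=0$. Any such lift decomposes as the linear part $2\pi t$ plus a continuous periodic correction $\delta(t)$ with $\delta(0)=\delta(1)$. The generator network outputs exactly such a correction, with the winding term and the boundary pinning imposed architecturally. So the task is to approximate $\delta$ uniformly on $[0,1]$ by an MLP, and the classical universal approximation theorem handles this. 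The boundary condition can be preserved by subtracting the affine interpolant of the endpoint error, which does not enlarge the uniform error by more than a factor of $2$. Uniform closeness of lifts then gives uniform closeness of loops in $X$, which is what "represent" should mean. I expect this step to be the main obstacle. The statement says "can represent any continuous loop", but an MLP gives only approximation, so I would state the result as density in the sup-norm topology. I would also have to check that the hard winding layer is surjective onto the set of lifts with the prescribed endpoint difference, and not merely contained in it.

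For (b), I would use the observation from Appendix~\ref{app:coherence}: on $T^2$ the transport decoder depends on a word only through its winding pair. By Construction~\ref{con:compilation} and strict functoriality, Theorem~\ref{thm:transport-comp}(a), it emits $g_a^{\oplus n_a}\oplus g_b^{\oplus n_b}$ for the canonical representative. Any other word with the same class is identified with this output via the descent in part (b) of that theorem. Hence the output for class $(n_a,n_b)$ is a function of $(\theta_a,\theta_b)$ alone.

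For (c), I would argue directly from (b). The map from parameters to outputs factors through $\Theta_a\times\Theta_b$, so the geometric degrees of freedom are exactly the shapes of the $k$ generator loops. Here $k$ is the number of generators, and $k=2$ in this setting. For the closing comparison, I would note that the cover decoder places an independent network on each class. This realizes an arbitrary tuple $(\gamma_n)_n$ with $\gamma_n\in\mathcal{L}_n$, which includes tuples that are not of the form in (b). That strict inclusion is the incoherence claimed.
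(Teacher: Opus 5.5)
Your proposal is correct and follows the paper's own proof, which is much terser: universal approximation of the generator network in angular (polar) coordinates for (a), reading off the canonical concatenation $g_a^{\oplus n_a}\oplus g_b^{\oplus n_b}$ directly from the construction for (b), and factoring through the canonical form $a^{n_a}b^{n_b}$ for (c). Your lift decomposition, endpoint correction and density reading of ``represent'' add detail the paper omits; one small point is that in (b) the literal equality for non-canonical words comes from the $T^2$ decoder reading only the winding pair, as you note, not from Theorem~\ref{thm:transport-comp}(b), which gives only a homotopy, so that appeal is redundant.
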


\begin{proof}
(a)~Universal approximation in polar coordinates.  (b)~By construction of the transport decoder.  (c)~For $\Z^2$, the group is free abelian on two generators; the transport decoder factors through canonical form $a^{n_a} b^{n_b}$, making the two generator shapes the only free parameters.
\end{proof}

\section{Resampling Artifact Analysis}\label{app:resampling}

Comparing loops of different lengths with a fixed-resolution metric creates a subtle confound.  This appendix shows that our per-segment Chamfer metric avoids it, and quantifies how large the artifact would be with a naive alternative.

All decoders produce a fixed-length output of $T_{\mathrm{out}} = 64$ points regardless of word length $L$.  This creates a systematic confound: as $L$ grows, the underlying curve lengthens while point density drops, causing naive Chamfer distance to decrease even when per-segment error is constant.

We verified this with synthetic curves under constant additive noise $\sigma = 0.1$:

\smallskip
\begin{center}
\small
\begin{tabular}{@{}lcccc@{}}
\toprule
& $L{=}2$ & $L{=}4$ & $L{=}8$ & $L{=}10$ \\
\midrule
Per-seg Chamfer (fair, 32 pts/seg) & 0.026 & 0.024 & 0.029 & 0.026 \\
Per-seg Chamfer (naive, resample to 64) & 0.010 & 0.003 & 0.002 & 0.001 \\
\midrule
Artifact ratio (naive/fair) & $0.4\times$ & $0.1\times$ & $0.07\times$ & $0.05\times$ \\
\bottomrule
\end{tabular}
\end{center}
\smallskip

\noindent The fair metric (comparing each segment at fixed 32 points) is flat ($\sim 0.026$), as expected for constant noise.  The naive metric decreases by $10\times$ from $L{=}2$ to $L{=}10$---a pure measurement artifact.  Our per-segment Chamfer $\bar{d}_L$ in all experiments avoids this by comparing each of the $L$ segments at a fixed resolution of 32 points, ensuring that the metric measures generalization quality, not resampling fidelity.

\end{document}